\theoremstyle{plain}
\newtheorem{theorem}{Theorem}[section]
\newtheorem{lemma}[theorem]{Lemma}
\theoremstyle{definition}
\theoremstyle{remark}
\icmltitlerunning{Learn. Dyn. Environ. Constr. Meas.-Induced Bundle Struct.}
\begin{document}

\twocolumn[

\icmltitle{Learning Dynamics under Environmental Constraints via Measurement-Induced Bundle Structures}



\icmlsetsymbol{equal}{*}

\begin{icmlauthorlist}
\icmlauthor{Dongzhe Zheng}{yyy}
\icmlauthor{Wenjie Mei}{comp}
\end{icmlauthorlist}

\icmlaffiliation{yyy}{Department of Computer Science and Engineering, Shanghai Jiao Tong University, Shanghai, China}

\icmlaffiliation{comp}{School of Automation and Key Laboratory of MCCSE of Ministry of Education, Southeast University, Nanjing, China}

\icmlcorrespondingauthor{Wenjie Mei}{wenjie.mei@seu.edu.cn; vmus1130@gmail.com}

\icmlkeywords{Machine Learning, ICML}

\vskip 0.3in
]


\printAffiliationsAndNotice{}



\begin{abstract}
Learning unknown dynamics under environmental (or external) constraints is fundamental to many fields (e.g., modern robotics), particularly challenging when constraint information is only locally available and uncertain. Existing approaches requiring global constraints or using probabilistic filtering fail to fully exploit the geometric structure inherent in local measurements (by using, e.g., sensors) and constraints. This paper presents a geometric framework unifying measurements, constraints, and dynamics learning through a fiber bundle structure over the state space. This naturally induced geometric structure enables measurement-aware Control Barrier Functions that adapt to local sensing (or measurement) conditions. By integrating Neural ODEs, our framework learns continuous-time dynamics while preserving geometric constraints, with theoretical guarantees of learning convergence and constraint satisfaction dependent on sensing quality. The geometric framework not only enables efficient dynamics learning but also suggests promising directions for integration with reinforcement learning approaches. Extensive simulations demonstrate significant improvements in both learning efficiency and constraint satisfaction over traditional methods, especially under limited and uncertain sensing conditions.
\end{abstract}

\section{Introduction}

\textbf{Learning unknown dynamics under measurement constraints} is fundamental to many applications, from manipulators with force sensing to autonomous vehicles with range detection. Control barrier functions (CBFs) \cite{ames2016control} have emerged as a powerful tool for ensuring constraint satisfaction. However, the classical CBF framework treats measurements as external observations rather than integral components of the system's geometric structure, limiting the ability to fully exploit measurement information for both constraint verification and dynamics learning.

The fundamental challenge lies in the geometric relationship between state space, measurements, and constraint manifolds. Traditional approaches often require complete knowledge of constraints, making them impractical when only local sensors' measurements are available. Consider a robotic arm with force sensors or a drone with range detectors - even local measurements contain sufficient geometric information about both constraints and underlying dynamics, suggesting global knowledge may be unnecessary if we properly exploit this local structure.

Our key insight is that measurement uncertainty naturally induces a fiber bundle structure that unifies measurements, dynamics, and constraints. This geometric perspective reveals how measurements and dynamics are fundamentally intertwined through the bundle's connection, enabling Neural ODEs \cite{chen2018neural} to learn continuous-time dynamics that naturally respect the system's physical behavior. By leveraging this structure, our approach provides a new paradigm for machine learning to understand environmental dynamics - instead of treating measurements as simple inputs, we exploit their inherent geometric information to guide the learning process. The framework allows control strategies to automatically adapt based on measurement quality - becoming more conservative in regions of high uncertainty while allowing more aggressive behavior where measurements are reliable.

\textbf{The main contributions of this work} are: 1) Proposes a novel geometric framework that unifies measurement uncertainty, system dynamics, and constraints within a fiber bundle structure, providing principled information for Neural ODEs while maintaining safety guarantees. 2) Introduces adaptive measurement-aware safety certificates (mCBFs, defined in Section~\ref{sec:mcbf}) that automatically adjust conservative margins based on local measurement quality. 3) Demonstrates enhanced generalization capabilities across different scenarios without requiring global information through experimental validation.

The practical significance lies in learning safely from local measurements without complete constraint knowledge. By working directly with the bundle structure induced by measurements, we prove that Neural ODEs trained within this framework naturally preserve physical constraints while learning continuous-time dynamics. This geometric approach provides machine learning algorithms with a structured way to understand system dynamics through the lens of measurement geometry, leading to more efficient and interpretable learning. Moreover, this framework offers important insights for \textbf{reinforcement learning} by providing a principled way to handle partial observations and measurement uncertainties in the learning process. Experimental results demonstrate the effectiveness in real-world applications, where the learned dynamics model successfully captures both the control-to-trajectory relationships and constraint requirements under measurement uncertainties. Our implementation is publicly available at \url{https://github.com/ContinuumCoder/Measurement-Induced-Bundle-for-Learning-Dynamics/}.

\section{Related Work}

\subsection{Safety-Critical Control and Learning}
Our work builds upon fundamental theoretical advances in differential geometry and control theory. The fiber bundle framework we employ originates from Ehresmann's seminal work \cite{ehresmann1950connexions} on geometric connections, later developed by \cite{kobayashi1996foundations} for control applications. Early work in safety-critical control focused on analytical safety certificates through Control Barrier Functions (CBFs). \cite{ames2019control} introduced the foundational CBF framework providing formal guarantees for constraint satisfaction in known dynamical systems, extended by \cite{jankovic2018robust, dacs2022robust, choi2021robust} to handle bounded disturbances through robust CBFs. Learning-based approaches emerged to address model uncertainty while maintaining safety guarantees. \cite{cheng2019end} proposed Neural CBFs that learn safety certificates directly from data, while \cite{taylor2020learning} developed the SafeLearn framework combining Gaussian processes with CBFs for safe exploration. However, these methods treat measurements as perfect observations rather than uncertain quantities, limiting their real-world applicability.

\subsection{Geometric Learning and Bundle Theory}
Geometric structure preservation in learning control has seen significant development, building on \cite{marsden1974reduction}'s theoretical foundation for geometric mechanics and symmetry reduction. \cite{ratliff2018riemannian} introduced Riemannian Motion Policies respecting the underlying manifold structure, while \cite{chen2018neural} proposed Neural ODEs that opened new possibilities for learning dynamics with geometric properties. The bundle-theoretic perspective was pioneered by \cite{LEWIS1998135} establishing connections between mechanical systems and principal bundles, with \cite{montgomery1993gauge} developing gauge-theoretic approaches to mechanical control. Building on this foundation, \cite{bronstein2017geometric, cohen2019gauge} advanced geometric methods for learning on manifolds, and \cite{hansen2021gem} developed frameworks for control on Lie groups. However, these approaches typically require global geometric information and struggle with local measurement uncertainty.

\subsection{Measurement-Aware Control}
The geometric treatment of measurement uncertainty draws inspiration from \cite{hsu2002stochastic} on stochastic differential geometry and \cite{diaconis1988geometric} on geometric filtering theory. Early approaches like \cite{kalman1960new} developed robust control using filtering-based state estimation, while \cite{berkenkamp2017safe} proposed learning control under measurement noise. Recent advances by \cite{wu2015structure} introduced geometric numerical integration methods for uncertainty quantification, while \cite{boumal2023introduction} developed comprehensive tools for optimization and estimation on manifolds. However, these works often treat measurement uncertainty as an external disturbance rather than an intrinsic geometric property of the system. Our framework addresses these limitations by unifying measurements, constraints, and learning objectives through the natural fiber bundle structure induced by the measurement process, enabling more efficient learning while maintaining rigorous safety guarantees.

\section{Theoretical Foundations and System Modeling}

Through fiber bundle structures and measurement-adapted barrier functions, this work establishes a geometric framework that unifies the challenge of maintaining safety guarantees while adapting to uncertainties in both system dynamics and measurements in safe learning control.

\subsection{System Model and Measurement Structure}

Let $T_x\mathcal{M}$ denote the tangent space at point $x \in \mathcal{M}$, which is the vector space of all tangent vectors at $x$, and $T\mathcal{M} = \bigcup_{x \in \mathcal{M}} T_x\mathcal{M}$ be the tangent bundle. Let $T^*_x\mathcal{M}$ denote the cotangent space at $x$ and $T^*\mathcal{M} = \bigcup_{x \in \mathcal{M}} T^*_x\mathcal{M}$ represents the cotangent bundle.

Consider a controlled dynamical system with state $x \in \mathcal{M}$ on a smooth manifold $\mathcal{M}$ and control input $u \in \mathcal{U} \subseteq \mathbb{R}^m$. The system evolution and measurement process are described by
\begin{equation}
\begin{aligned}
\dot{x} &= f(x,u) + g(x)w, \quad x(0) = x_0 \\
y &= h(x) + v
\end{aligned}
\end{equation}
where $f\colon \mathcal{M} \times \mathcal{U} \to T\mathcal{M}$ represents the nominal dynamics, $g\colon \mathcal{M} \to T^*\mathcal{M}$ characterizes model uncertainty, and $h\colon \mathcal{M} \to \mathcal{Y}$ is the measurement map. The process noise $w$ and sub-Gaussian measurement noise $v$ are bounded with $|w| \leq \delta_w$ and $|v| \leq \delta_v$, respectively.

The measurement space $\mathcal{Y} \subseteq \mathbb{R}^k$ carries a natural metric structure induced by the Euclidean norm: $ d_{\mathcal{Y}}(y_1, y_2) = \sqrt{\sum_{i=1}^k (y_{1,i} - y_{2,i})^2}$. This metric quantifies the uncertainty in measurement space and plays a crucial role in safety analysis.

\subsection{Fiber Bundle Framework}

The relationship between states and their measurements induces a natural fiber bundle structure $\pi\colon \mathcal{E} \to \mathcal{M}$ where $\mathcal{E} = \mathcal{M} \times \mathcal{Y}$ is the total space. For each state $x \in \mathcal{M}$, the fiber $\pi^{-1}(x)$ characterizes the set of possible measurements:
\begin{equation}
\pi^{-1}(x) = \{(x,y) \in \mathcal{E} : y = h(x) + v, \|v\| \leq \delta_v\}
\end{equation}
This bundle is equipped with a connection $\nabla$ that describes the geometric relationship between system trajectories and measurement evolution:
\begin{equation}
\nabla_X Y = \pi_*^{-1}(\nabla_{\pi_* X} (\pi_* Y)) + K(x) \big(y - h(x)\big)
\end{equation}
where $\nabla_X Y$ represents the covariant derivative of the vector field $Y \in  \mathfrak{X}(\mathcal{E})$ along the vector field $X \in  \mathfrak{X}(\mathcal{E})$ (here, $\mathfrak{X}(\mathcal{E})$ denotes the space of smooth vector field on $\mathcal{E}$), $\pi_*$ refers to the pushforward map of the projection $\pi$, $K\colon \mathcal{M} \to \mathcal{L}(\mathcal{Y},T\mathcal{M})$ is the measurement feedback gain operator that couples state and measurement dynamics. Here, $\mathcal{L}(\mathcal{Y},T\mathcal{M})$ denotes the space of bounded linear operators from $\mathcal{Y}$ to $T\mathcal{M}$.



\subsection{Bundle-Based Safety Certificates}

 Safety constraints are formalized through a smooth bundle map $\Phi\colon \mathcal{E} \to \mathbb{R}$ over the fiber bundle $\pi\colon \mathcal{E} \to \mathcal{M}$ satisfying three fundamental properties:
\begin{equation}
\begin{aligned}
&1.\ \Phi(x,h(x)) > 0 \text{ for all } x \in \mathcal{S}_0 \\
&2.\ \text{The bundle derivative } d\Phi(X) > 0 \text{ for all } X \in \mathcal{A}(\mathcal{E}), \\
&\quad \text{where } d\Phi(X) := \langle \nabla_{\mathcal{E}} \Phi, X \rangle \\
&3.\ \Phi(x,y) \geq \gamma(\|y - h(x)\|) \text{ for some } \gamma \in \mathcal{K}_\infty
\end{aligned}
\end{equation}
Here, $\mathcal{S}_0$ denotes the nominal safe set, $\mathcal{A}(\mathcal{E})$ is the space of admissible vectors on the total space $\mathcal{E}$, $\nabla_{\mathcal{E}}$ is the covariant derivative on $\mathcal{E}$, and $\gamma$ is a class $\mathcal{K}_\infty$ function that captures the degradation of safety guarantees with measurement uncertainty.

\subsection{Uncertainty Propagation}

The propagation of uncertainties through the bundle structure follows from the differential geometry of the fiber bundle. The key relationships are:
\begin{equation}
\begin{aligned}
d\pi_*(X_f) &= f(x,u) \\
d\pi_*(X_g) &= g(x)w \\
dy &= dh(x) + dv
\end{aligned}
\end{equation}
where $d\pi_*$ represents the pushforward of the vector fields $X_f, X_g$ along the projection $\pi$. 
These relationships induce an uncertainty tube $\mathcal{T}_\varepsilon(x,t)$ around nominal trajectories:
\begin{equation}
\mathcal{T}_\varepsilon(x,t) = \{y : d_{\mathcal{Y}}(y,h(\phi_t(x))) \leq \varepsilon(t)\}
\end{equation}
where $\phi_t$ denotes the flow of the nominal system and $\varepsilon(t)$ characterizes the growth of uncertainty over time.

\subsection{Compatible Group Actions}

The system often exhibits symmetries that can be exploited for, for example, dimensional reduction. These symmetries are captured by compatible Lie group actions. Let $G$ be a Lie group acting on both $\mathcal{M}$ and $\mathcal{Y}$ through smooth maps. Then,
\begin{equation}
\begin{aligned}
\Psi_{\mathcal{M}}\colon G \times \mathcal{M} &\to \mathcal{M} \\
\Psi_{\mathcal{Y}}\colon G \times \mathcal{Y} &\to \mathcal{Y}
\end{aligned}
\end{equation}
The compatibility conditions for these actions are
\begin{equation}
\begin{aligned}
f(\Psi_{\mathcal{M}}(g,x), u) &= d\Psi_{\mathcal{M}}(g,\cdot)f(x,u) \\
h(\Psi_{\mathcal{M}}(g,x)) &= \Psi_{\mathcal{Y}}(g,h(x))
\end{aligned}
\end{equation}
for all $g \in G$, where $d\Psi_{\mathcal{M}}(g,\cdot)$ represents the differential of the map $\Psi_{\mathcal{M}}(g,\cdot)$ with respect to the state. These conditions ensure that the symmetries respect both the dynamics and measurements.

\subsection{Measurement-Adapted Control Barrier Functions} \label{sec:mcbf}
The cornerstone of our safety framework is the concept of measurement-adapted Control Barrier Functions (mCBFs). A smooth function $b\colon \mathcal{E} \to \mathbb{R}$ qualifies as an mCBF if it satisfies
\begin{equation}
\begin{aligned}
&1.\ b(x,y) \geq 0 \implies x \in \mathcal{S}_0 \\
&2.\ \inf_{u \in \mathcal{U}} \Big[L_f b + (L_g b) w + \alpha(b) \Big] \geq 0 \\
&3.\ |b(x,y_1) - b(x,y_2)| \leq L_b d_{\mathcal{Y}}(y_1,y_2)
\end{aligned}
\end{equation}
where $L_f$ and $L_g$ denote the Lie derivatives along vector fields $f$ and $g$ respectively, $\alpha$ is a class $\mathcal{K}_\infty$ function, and $L_b>0$ is the Lipschitz constant of $b$ with respect to measurements (recall that $\mathcal{S}_0$ denotes the nominal safe set).

\subsection{Safety Guarantees}

The culmination of this geometric framework is captured in the following fundamental theorem:
\begin{theorem} \label{thm:guarantee}
Given an mCBF $b$ satisfying the preceding conditions, if $b(x(0),y(0)) \geq 0$, then for any admissible noise sequences $w(\cdot),v(\cdot)$:
\begin{equation}
\mathbb{P}(x(t) \in \mathcal{S}_0 \text{ for all } t \geq 0) \geq 1 - \exp(-c/ \delta_v^2)
\end{equation}
where $c > 0$ is a constant depending on system parameters.
\end{theorem}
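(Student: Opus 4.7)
The plan is to combine the deterministic forward-invariance argument of classical CBF theory with a sub-Gaussian concentration bound capturing the effect of measurement noise. The three mCBF conditions already supply what is needed: condition 1 provides sublevel-set safety, condition 2 gives dissipativity along the controlled flow, and condition 3 bounds the perturbation of $b$ by $v$. The proof then reduces to turning condition 2 into a stochastic differential inequality and using condition 3 to absorb the effect of the measurement noise.

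\textbf{Deterministic step.} By mCBF condition 2, for every $(x,y)$ one can select a feedback $u^\star(x,y) \in \mathcal{U}$ achieving $L_f b + (L_g b) w + \alpha(b) \geq 0$ for all admissible $w$. Along the closed-loop trajectory I would decompose
\begin{equation*}
\dot{b}(x(t),y(t)) = L_f b + (L_g b) w + \partial_y b \cdot \bigl(dh(x)\dot{x} + \dot{v}\bigr),
\end{equation*}
invoke the Lipschitz bound $|\partial_y b| \leq L_b$ from condition 3 to isolate a deterministic part dominated by $-\alpha(b)$ and a measurement-driven perturbation of size $O(L_b \|\dot{v}\|)$, and then apply the comparison lemma to $\dot{\beta} = -\alpha(\beta)$ to keep $b$ non-negative from any non-negative initial value. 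Condition 1 then forces $x(t) \in \mathcal{S}_0$.

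\textbf{Probabilistic step.} Sub-Gaussianity of $v$ yields the pointwise Chernoff-type estimate $\mathbb{P}(\|v\| > \eta) \leq \exp(-c_0 \eta^2/\delta_v^2)$, but a uniform-in-time guarantee is needed. I would construct the non-negative exponential process $M(t) = \exp\bigl(-\lambda\, b(x(t),y(t))\bigr)$ for a suitable $\lambda > 0$: the dissipation $\alpha(b)$ decreases $M$ in expectation, while the log-moment-generating function of the sub-Gaussian increments contributes a correction of order $\lambda^2 L_b^2 \delta_v^2$, which is dominated by the dissipation for $\lambda$ small enough. Hence $M$ is a supermartingale, and Doob's maximal inequality delivers $\mathbb{P}\bigl(\sup_{t \geq 0} M(t) \geq 1\bigr) \leq M(0)\exp(-c/\delta_v^2)$. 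Since $\{x(t) \notin \mathcal{S}_0\} \subseteq \{M(t) \geq 1\}$ via condition 1, the theorem follows.

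The main obstacle is verifying that $M$ is a genuine, rather than merely local, supermartingale in the presence of the bundle connection term $K(x)(y - h(x))$: the cross-term between the covariant derivative of $b$ and $v$ must be bounded uniformly in $(x,y)$ on the admissible set, and the class-$\mathcal{K}_\infty$ nature of $\alpha$ makes the dissipation-vs-noise trade-off nonlinear. The cleanest route exploits the hypothesis $|v| \leq \delta_v$ to make the exponential bounded almost surely, after which the standard passage from the single-time Chernoff bound to the uniform statement goes through. Collecting the dependencies of $\lambda$, $L_b$, the drift norms $\|L_f b\|$ and $\|L_g b\|$, the rate $\alpha$, and the connection $K$ into the single constant $c$ is then the remaining bookkeeping.
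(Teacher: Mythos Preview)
Your supermartingale-plus-Doob route is genuinely different from the paper's proof. The paper never constructs an exponential process; instead it (i) fixes the nominal barrier $b(x(t),h(x(t)))\geq b_{\min}$ by the comparison lemma under perfect measurements, (ii) bounds the state deviation $\|x-\hat x\|$ via Gronwall to get an uncertainty tube, (iii) decomposes the realized margin as $b(x(t),y(t))\geq b_{\min}-L_b\|v(t)\|-\gamma_b(\delta_w)$, (iv) applies the sub-Gaussian tail pointwise to obtain $\mathbb{P}(A_t)\leq\exp(-c_2/\delta_v^2)$, and (v) upgrades this to a uniform-in-$t$ statement by discretizing $[0,\infty)$ and invoking a temporal-decorrelation estimate $\mathbb{P}(A_t\cap A_{t'})\leq\exp(-\min\{c_1|t-t'|,\,c_2/\delta_v^2\})$ to sum the pieces. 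Your approach bypasses the discretization and decorrelation machinery entirely, trading it for a continuous-time martingale argument; this is the standard and arguably cleaner way to obtain pathwise bounds, though it requires a well-specified stochastic model for $v$ so that $M(t)$ is an honest process.

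One point in your sketch needs repair. Doob's maximal inequality applied to a non-negative supermartingale $M$ gives $\mathbb{P}(\sup_t M(t)\geq 1)\leq \mathbb{E}[M(0)]=\exp(-\lambda\,b(x(0),y(0)))$, with no extra $\exp(-c/\delta_v^2)$ factor; the entire decay must come from $M(0)$. That forces two things you do not make explicit: first, a strictly positive initial margin $b_0>0$ (the theorem as written only assumes $b(x(0),y(0))\geq 0$; the paper's own proof silently strengthens this in its nominal-safety lemma); second, an \emph{optimized} choice of $\lambda$. The supermartingale constraint you identify, drift $\lambda\kappa$ versus noise correction $\tfrac12\lambda^2 L_b^2\delta_v^2$, caps $\lambda$ at order $\kappa/(L_b^2\delta_v^2)$, and you must take $\lambda$ at that upper edge---not merely ``small enough''---to make $\exp(-\lambda b_0)=\exp(-c/\delta_v^2)$ with $c\sim\kappa b_0/L_b^2$. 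Once that balance is stated, the argument closes.
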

\begin{proof}[Proof Sketch]
The proof proceeds through three key steps. First, we establish that the bundle connection preserves safety certificates along fibers, utilizing the compatibility conditions between the connection and barrier function. Second, we demonstrate that uncertainty propagation remains bounded within the tube $\mathcal{T}_\varepsilon$, leveraging the Lipschitz properties of the system dynamics. Finally, we show that the Lipschitz condition on $b$ ensures controlled variation of safety certificates under measurement uncertainty, leading to the probabilistic bound $(1 - \exp(-c/\delta_v))$. The technical proof is provided in Appendix~\ref{app:guarantee_proof}.
\end{proof}

The geometric framework developed in this section establishes three key theoretical advances. First, the fiber bundle structure provides a natural setting for handling measurement uncertainty, enabling precise tracking of error propagation through system dynamics. Second, the compatible group actions facilitate systematic dimension reduction while preserving safety properties through quotient space dynamics. Third, the measurement-adapted Control Barrier Functions yield robust safety guarantees that degrade gracefully with measurement noise, thanks to their Lipschitz continuity properties.

This theoretical foundation directly enables practical learning algorithms that maintain safety under realistic sensing conditions, as we will demonstrate in the subsequent section.

\section{Learning Framework under Measurement Uncertainties}

Building on the geometric foundations, we now develop a learning framework that actively incorporates measurement uncertainty. The key idea is to learn both the dynamics and safety certificates on the bundle $\mathcal{E}$.

\subsection{Bundle-Valued Learning Operators}

Define the bundle-valued learning operator $\mathcal{L}\colon C^\infty(\mathcal{E}) \to \Gamma(T\mathcal{E})$:
\begin{equation}
\mathcal{L}(\Phi)(x,y) = \nabla_{\mathcal{E}} \Phi(x,y) + \lambda R(x,y)
\end{equation}
where $C^\infty(\mathcal{E})$ denotes the space of smooth functions defined on the total space $\mathcal{E}$, $\Gamma(T\mathcal{E}) $ stands for the space of sections of the tangent bundle $T\mathcal{E}$, $\mathcal{L}(\Phi)(x,y)$ represents the operator $\mathcal{L}$ acting on $\Phi$, evaluated at the point 
$(x,y)$, $\nabla_{\mathcal{E}}$ is the connection defined on $\mathcal{E}$, and $R$ provides a regularization function preserving the fiber structure of the bundle.

The learning dynamics on the bundle take the form
\begin{equation}
\begin{aligned}
\dot{\hat{f}} &= -\mathcal{L}_1(\hat{f} - f) \\
\dot{\Phi} &= -\mathcal{L}_2(\Phi - \Phi^*)
\end{aligned}
\end{equation}
where $\mathcal{L}_1,\mathcal{L}_2$ are compatible bundle-valued operators. $\hat{f}$ denotes the learned estimate of the true system dynamics $f$, while $\Phi^*$ represents the optimal barrier function that ensures safety guarantees, both serving as target values in the learning dynamics governed by bundle-valued operators. 

\subsection{Measurement-Adapted Safety Certificates}
Let $\Phi_0 \colon \mathcal{E} \to \mathbb{R}$ denote the nominal safety certificate that characterizes system safety under ideal measurements.
The safety certificate $\Phi\colon \mathcal{E} \to \mathbb{R}$ adapts to measurement uncertainty through:
\begin{equation}
\begin{aligned}
\Phi(x,y) &= \Phi_0(x,y)- \alpha(\|y - h(x)\|) \\
L_f\Phi + (L_g\Phi) w &\geq -\beta(\Phi) \text{ along solutions}
\end{aligned}
\end{equation}
where $\Phi_0$ is the nominal certificate, $\alpha \in \mathcal{K}_\infty$, and $\beta$ is a class $\mathcal{K}$ function.

\subsection{Uncertainty-Aware Learning Algorithm}

The learning process incorporates measurement uncertainty through:
\begin{equation}
\begin{aligned}
\dot{\theta} &= -\Lambda \nabla_\theta \mathcal{T}(\hat{f}_\theta, \mathcal{D}) \\
\mathcal{T}(\hat{f}, \mathcal{D}) &= \sum_{i=1}^N \|\hat{f}(x_i,u_i) - \dot{x}_i\|^2_{\Sigma_i^{-1}}
\end{aligned}
\end{equation}
where $\Sigma_i$ captures measurement uncertainty in data point $i$. The learning rate matrix $\Lambda$ guides parameter updates, while $|\cdot|^2_{\Sigma_i^{-1}}$ denotes the uncertainty-weighted norm using the inverse covariance matrix $\Sigma_i^{-1}$, and $\mathcal{D}$ contains $N$ triplets of state, input, and state derivative measurements.

\subsection{Safety-Constrained Policy Updates}
Let $\Theta \in \mathbb{R}^d$ denote the parameters of a policy $\pi_\Theta : \mathcal{M} \to \mathcal{U}$ that maps state to control inputs. The policy update law preserves safety through:
\begin{equation}
\begin{aligned}
\dot{\Theta} &= \Pi_{\mathcal{S}}[-\nabla_\Theta J(\Theta)] \\
\mathcal{S} &= \{\Theta : \Phi(x,y) \geq 0 \text{ for all } (x,y) \in \mathcal{E}\}
\end{aligned}
\end{equation}
where $\Pi_{\mathcal{S}}$ denotes projection onto the safe policy set $\mathcal{S}$, and $J(\Theta) = \mathbb{E}{x_0}[\sum_{t=0}^{\infty} \gamma^t c(x_t,\Theta(x_t))]$ represents the expected discounted cumulative cost under policy $\Theta$, with immediate cost $c(x,\Theta(x))$, discount factor $\gamma \in (0,1)$, and initial state distribution $x_0$.


\subsection{Convergence and Safety Guarantees}

Building on Theorem~\ref{thm:guarantee}, we establish the convergence properties of our learning framework:

\begin{theorem} \label{thm:convergence}
Under the proposed learning dynamics, we have
\begin{equation}
\begin{aligned}
& \|\hat{f} - f\|_{\mathcal{E}} \leq c_1\exp(-\lambda_1 t) + c_2\delta_v \\
& \mathbb{P}(x(t) \in \mathcal{S}_0) \geq 1 - \exp(-c_3/ \delta_v^2)
\end{aligned}
\end{equation}
where $c_1,c_2,c_3,\lambda_1 > 0$ are constants.
\end{theorem}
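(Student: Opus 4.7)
My plan is to handle the two estimates separately. The probabilistic safety bound $\mathbb{P}(x(t)\in\mathcal{S}_0)\geq 1-\exp(-c_3/\delta_v^2)$ will follow directly from Theorem~\ref{thm:guarantee} once I verify that the adaptive certificate $\Phi$ produced by the learning dynamics $\dot{\Phi}=-\mathcal{L}_2(\Phi-\Phi^*)$ remains a valid mCBF for all $t\geq 0$. For this I would exploit the fact that $\Phi^*$ is an mCBF by construction and show that the three defining properties (positivity on $\mathcal{S}_0$, the CBF inequality, and Lipschitz continuity in the fiber coordinate) are each preserved under the contraction flow toward $\Phi^*$; a comparison argument on $\Phi-\Phi^*$ transfers these properties uniformly in time, so that Theorem~\ref{thm:guarantee} may be invoked with a constant $c_3$ that absorbs the Lipschitz constant $L_b$ and the bundle-connection gain $\|K\|$.

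For the first estimate I would perform a Lyapunov analysis on the bundle. Setting $e:=\hat{f}-f$ and differentiating along the learning ODE $\dot{\hat{f}}=-\mathcal{L}_1(\hat{f}-f)$, the error obeys $\dot{e} = -\mathcal{L}_1 e + \eta(t)$, where $\eta(t)$ collects the perturbation due to evaluating $f$ at the noisy measurement $y$ rather than the true state $x$, together with any drift of $f$ along nominal trajectories that is not tracked by $\hat{f}$. Using the Lipschitz property of $f$, the bound $\|v\|\leq\delta_v$, and the fiber-wise coupling $K(x)(y-h(x))$ from the bundle connection, I would establish $\|\eta(t)\|_{\mathcal{E}} \leq \kappa\,\delta_v$ for some $\kappa$ depending only on $L_f$, $\|K\|$, and the local geometry of $\mathcal{E}$. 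With the Lyapunov candidate $V(e)=\tfrac12\langle e, e\rangle_{\mathcal{E}}$ and a coercivity estimate $\langle \mathcal{L}_1 e, e\rangle_{\mathcal{E}} \geq \lambda_1 \|e\|_{\mathcal{E}}^2$ inherited from the compatibility of $\mathcal{L}_1$ with the bundle metric, one obtains $\dot{V} \leq -2\lambda_1 V + \kappa\,\delta_v\sqrt{2V}$. A standard ISS-type comparison lemma then yields $\|e(t)\|_{\mathcal{E}} \leq c_1 e^{-\lambda_1 t} + c_2\delta_v$, which is exactly the asserted bound.

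The main obstacle I anticipate is establishing the coercivity estimate $\langle \mathcal{L}_1 e, e\rangle \geq \lambda_1 \|e\|^2$ in a genuinely geometric way: the operator $\mathcal{L}_1 = \nabla_{\mathcal{E}}(\cdot) + \lambda R$ mixes the covariant derivative on $\mathcal{E}$ with the fiber-preserving regularizer $R$, so its spectral gap depends nontrivially on the curvature of the connection $\nabla$ and on the choice of $R$. Identifying the right structural assumptions on $R$ (for instance, strong convexity along fibers together with a compatibility condition with the horizontal distribution induced by $K$) that yield a uniform positive $\lambda_1$ is the delicate step; once that is in place, the bound on $\eta(t)$, the Gr\"onwall-type comparison, and the invocation of Theorem~\ref{thm:guarantee} for the safety part are all routine.
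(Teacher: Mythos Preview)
Your proposal is correct and follows essentially the same route as the paper: a quadratic Lyapunov function $V=\tfrac12\|\hat f-f\|_{\mathcal E}^2$ combined with a spectral lower bound on $\mathcal{L}_1$ and a comparison argument for the first estimate, and the mCBF machinery underlying Theorem~\ref{thm:guarantee} for the second. The coercivity obstacle you anticipate is not actually resolved in the paper---it is simply \emph{assumed} via the hypothesis that $\mathcal{L}_1$ has real eigenvalues with $\lambda_{\min}(\mathcal{L}_1)>0$, and likewise the paper re-derives the probabilistic bound directly from the Lipschitz and sub-Gaussian properties rather than checking that the evolving $\Phi$ stays an mCBF, so your plan is if anything more careful than what the paper carries out.
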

A detailed proof of Theorem~\ref{thm:convergence} is in Appendix~\ref{app:convergence_proof}. The first inequality shows exponential convergence of the learned dynamics with a residual error bounded by measurement uncertainty, while the second preserves the safety guarantees during learning.


\section{Experimental Design}

We design a comprehensive experimental framework to evaluate our proposed method against state-of-the-art approaches, focusing on three interconnected research directions: Learning-based safety control, geometric structure learning, and safe control under uncertainty. The experiments are constructed to highlight key methodological differences while ensuring fair comparison through standardized implementations and evaluation protocols.

\subsection{Baseline Methods}

For detailed discussions on comparisons with mainstream advanced manifold learning methods, we refer readers to Appendix \ref{app:comparison_discussion}. While direct numerical comparisons with recent geometric deep learning approaches may seem natural, there are several fundamental differences that make such direct benchmarking potentially misleading. We evaluate our method against state-of-the-art approaches spanning different technical directions in safe learning control. Our baseline selection aims to comprehensively compare with methods that address various aspects of our proposed framework:

\textbf{Learning-based Safety Certification:} We implement Neural-CBF \cite{liu2023safe}, BayesSafe \cite{berkenkamp2023bayesian}, and StructCBF \cite{taylor2020learning} as fundamental approaches using neural networks and Bayesian optimization for safety certification. Recent advances like SafetyNet \cite{vitelli2022safetynet} and SafeTrack \cite{li2024system} enhance these guarantees through adaptive barriers and system-level guards, though they still lack explicit handling of measurement uncertainty. \textbf{Physics-Informed and Geometric Methods:} To evaluate our physical consistency, we compare against PNDS \cite{djeumou2022neural} and GEM \cite{hansen2021gem}, which encode physical laws through specialized neural architectures. We also include GeoPath \cite{zhang2015geometric} that leverages geometric principles for control design, though without addressing measurement uncertainty. \textbf{Robust and Adaptive Control:} Several approaches address system robustness through different theoretical frameworks. RobustSafe \cite{gurriet2020scalable} provides safety-critical control using fixed uncertainty bounds, while DataFilter \cite{wabersich2023data} leverages data-driven safety filters for handling uncertainties, and AdaptSafe \cite{taylor2020adaptive} introduces measurement-dependent barrier functions. \textbf{Uncertainty-Aware Predictive Control:} For handling uncertain dynamics, GPMPC \cite{bonzanini2021fast} and ALMPC \cite{saviolo2023active} employ Gaussian processes and active learning for uncertainty quantification. SafeRL \cite{cheng2019end} combines reinforcement learning with barrier functions, demonstrating strong performance in handling model uncertainty through probabilistic frameworks, though these methods lack formal geometric safety certificates.

\subsection{Experimental Tasks}

We implement three tasks in a simulation environment built on Genesis physics engine \cite{xian2023generalistrobotspromisingparadigm}. The first task examines a soft-body worm robot (0.1m per segment) navigating through obstacles to reach a target, using a fixed-step forward Euler integrator (dt = 5e-4s). The worm is modeled using the Material Point Method (MPM) with the neo-Hookean material model. Let $\mathbf{x} \in \mathbb{R}^3$ denote the position field and $\rho$ the material density, the dynamics follow:
\begin{equation}
\rho \ddot{\mathbf{x}} = \nabla \cdot \mathbf{P} + \mathbf{b}
\end{equation}
where $\mathbf{P}$ is the first Piola-Kirchhoff stress tensor and $\mathbf{b}$ represents body forces. For neo-Hookean materials:
\begin{equation}
\mathbf{P} = \mu(\mathbf{F} - \mathbf{F}^{-T}) + \lambda\log(J)\mathbf{F}^{-T}
\end{equation}
Here $\mathbf{F}$ is the deformation gradient, $J=\det(\mathbf{F})$, and material parameters $\mu$, $\lambda$ are unknown. The control input consists of four muscle actuation signals $\mathbf{u} = [u_{uf}, u_{uh}, u_{lf}, u_{lh}]^\top \in [0,1]^4$, where subscripts indicate upper-fore, upper-hind, lower-fore, and lower-hind muscle groups respectively. The system must maintain safe distances from obstacles through constraints $h_i(\mathbf{x}) = \|\mathbf{x} - \mathbf{x}_{obs,i}\|_2 - r_{safe} \geq 0$ for all obstacles $i = 1,\ldots,N_{obs}$. Six visual sensors (two on head/tail, four on sides) provide local measurements $\mathbf{y}_i = [\mathbf{x} - \mathbf{x}_{obs}, \|\mathbf{x} - \mathbf{x}_{obs}\|_2]^\top + \mathbf{v}_i$ with uncertainty bound $\|\mathbf{v}_i\| \leq \alpha\|\mathbf{x} - \mathbf{x}_i\|$ increasing with distance from sensor location $\mathbf{x}_i$.

The second task involves a 7-DOF Franka arm performing obstacle-aware manipulation, integrated with dt = 1e-2s. Let $\mathbf{q} \in \mathbb{R}^7$ denote the joint angles and $\mathbf{M}(\mathbf{q})$ the inertia matrix, the system dynamics with control input $\boldsymbol{\tau}$ follow:
\begin{equation}
\mathbf{M}(\mathbf{q})\ddot{\mathbf{q}} + \mathbf{C}(\mathbf{q},\dot{\mathbf{q}})\dot{\mathbf{q}} + \mathbf{g}(\mathbf{q}) + \mathbf{f}(\dot{\mathbf{q}}) = \boldsymbol{\tau}
\end{equation}
where $\mathbf{C}(\mathbf{q},\dot{\mathbf{q}})$ represents Coriolis terms, $\mathbf{g}(\mathbf{q})$ gravity, $\mathbf{f}(\dot{\mathbf{q}})$ joint friction, and $\boldsymbol{\tau}$ control torques. The system operates under joint limits $h_q(\mathbf{q}) = \mathbf{q}_{max} - |\mathbf{q}| \geq \mathbf{0}$ and obstacle avoidance constraints $h_o(\mathbf{q}) = \|\mathbf{p}_{ee}(\mathbf{q}) - \mathbf{p}_{obs}\|_2 - d_{safe} \geq 0$, where $\mathbf{p}_{ee}(\mathbf{q})$ and $\mathbf{p}_{obs}$ denote the end-effector and obstacle positions, respectively.

The third task features a quadrotor drone navigating through 3D space, integrated with dt = 2e-3(\emph{sec}). Let $\mathbf{p} \in \mathbb{R}^3$ denote position, $\mathbf{v} \in \mathbb{R}^3$ velocity, $\mathbf{R} \in SO(3)$ orientation, and $\boldsymbol{\omega} \in \mathbb{R}^3$ angular velocity, the dynamics are described by
\begin{equation}
\begin{bmatrix} \dot{\mathbf{p}} \\ \dot{\mathbf{v}} \\ \dot{\mathbf{R}} \\ \dot{\boldsymbol{\omega}} \end{bmatrix} = 
\begin{bmatrix} \mathbf{v} \\ \frac{1}{m}\mathbf{R}\mathbf{f} - g\mathbf{e}_3 - \mathbf{D}(\mathbf{v}) \\ 
\mathbf{R}[\boldsymbol{\omega}]_\times \\ \mathbf{J}^{-1}(\boldsymbol{\tau} - \boldsymbol{\omega} \times \mathbf{J}\boldsymbol{\omega}) \end{bmatrix}
\end{equation}
where $m$ is mass, $g$ unknown gravity, $\mathbf{D}(\mathbf{v})$ unknown aerodynamic drag, $\mathbf{J}$ inertia matrix, and $[\cdot]_\times$ the skew-symmetric matrix operator. The control input $\mathbf{f}$ is generated through four rotor speeds $\omega_i$, with unknown thrust coefficient $k_f$. The quadrotor must maintain safe distances from obstacles through constraints $h_i(\mathbf{p}) = \|\mathbf{p} - \mathbf{p}_{obs,i}\|_2 - d_{safe} \geq 0$. Four visual sensors provide depth measurements $y_i = \|\mathbf{p} - \mathbf{p}_{obs,i}\|_2 + v_i$ in front, back, left and right directions, with uncertainty bound $|v_i| \leq \gamma\|\mathbf{p} - \mathbf{p}_{obs,i}\|_2$ proportional to depth. Here $\alpha$, $\beta$, $\gamma$ are unknown uncertainty scaling factors.

\begin{figure}[h!]
    \centering
    \scalebox{0.99}{
        \includegraphics[width=\linewidth]{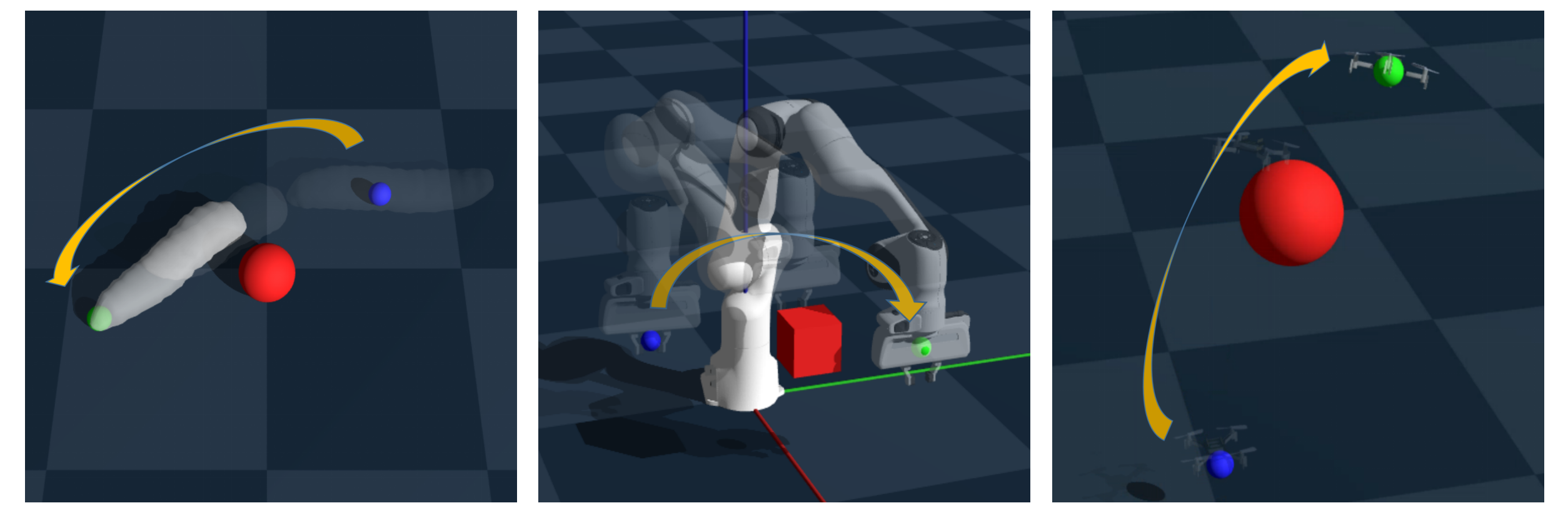}
    }
    \vspace{-5mm}
    \caption{Illustration of three experimental tasks from left to right: A soft-bodied worm robot navigating through obstacles using peristaltic motion, a 7-DOF Franka robotic arm performing obstacle-aware joint motion, and a quadrotor drone executing 3D navigation. Blue spheres indicate initial positions, yellow arrows represent motion trajectories, green spheres mark target positions, and red objects denote obstacles.}
    \label{fig:tasks}
\end{figure}

For all three tasks, the workspace is configured as a 2m × 2m × 2m arena with randomly placed obstacles. The obstacles' positions are sampled uniformly within the workspace while maintaining minimum separation distances. Initial and goal states are sampled to ensure feasible paths exist while providing sufficient challenge for evaluating the learning and control performance.

\subsection{Implementation Details}
All experiments are implemented in Python using PyTorch, with Soft Actor-Critic (SAC) as our base reinforcement learning framework. For fair comparisons, we maintain the original implementations for model-based baselines (GPMPC, RobustSafe, ALMPC) and learning-based baselines (Neural-CBF, SafetyNet, DataFilter). Our neural architectures use three hidden layers (128-64-32 units) with ReLU activations, while barrier functions add a tanh activation in the output layer for boundedness. All networks are trained with Adam optimizer using mixed precision training on an NVIDIA RTX 3090 GPU. Detailed implementation specifications, including hyperparameters, network architectures, and optimization techniques, are provided in Appendix \ref{app:impl}.

For the soft worm task, uncertainties include MPM material parameters ($\mu$, $\lambda$ variations of $\pm$10\%), actuation response ($\pm$5\% muscle force scaling), and sensor noise proportional to distance (0.5-2\% of measured distance). The Franka arm experiments incorporate joint friction variations ($\pm$8\%), payload changes (0-200g), and measurement uncertainties in joint angles ($\pm$0.02 rad) and end-effector pose ($\pm$2cm). The quadrotor tests feature mass variations ($\pm$5\%), aerodynamic disturbances (up to 0.2N), and depth measurement noise scaling with distance (1-3\% of measured depth).

\subsection{Evaluation Metrics and Protocol} \label{subsec:evaluation}
We evaluate both motion quality and safety performance using comprehensive metrics including success rate (SR), path efficiency measures, safety margins, and control quality indicators. Detailed definitions and calculations of these metrics are provided in Appendix \ref{app:metrics}. All experiments are conducted in a 2m × 2m × 2m workspace with randomly generated start/goal positions and obstacle placements. We test the worm robot (500 trials), Franka arm (400 trials), and quadrotor (300 trials) under various task scenarios. Complete experimental settings and success criteria are detailed in Appendix \ref{app:protocol}.

\section{Results and Analysis}

We evaluate our method across three robotic control tasks to demonstrate generalization of safety constraints under novel obstacle configurations with local observations. Table~\ref{tab:results_comparison} presents the quantitative results.

While learning-based safety approaches achieve limited success rates (82\%-86\%) with fixed barrier functions that cannot adapt to new configurations, and physics-informed methods maintain geometric properties but achieve only 73\%-76\% success in dynamic environments, our method's measurement-induced bundle structure enables 96.3\% success rate. Traditional uncertainty-aware MPC methods achieve high constraint satisfaction (99.7\%) but produce overly conservative trajectories (26-27m vs our 18.5m), lacking our geometric framework for measurement uncertainty.

The key advantage of our approach lies in the unified geometric treatment of measurement uncertainty and safety constraints. Unlike recent adaptive methods that handle uncertainty estimation and safety certification separately (88\%-89\% success), our framework enables simultaneous adaptation of safety bounds and uncertainty estimation through the fiber bundle structure. This fundamental integration of measurement uncertainty into the geometric safety constraints allows our method to achieve superior performance (96.3\% success, 18.5m paths with 99.3\% constraint satisfaction) while maintaining robust safety guarantees across novel environments.

\begin{table*}[h]
\centering
\scalebox{0.78}{
\begin{tabular}{l|c|c|c|c|c|c|c|c|c}
\hline
\textbf{Method} & \textbf{SR (\%)} & \textbf{PL (m)} & \textbf{GRS} & \textbf{FSE} & \textbf{MMC (m)} & \textbf{AMC (m)} & \textbf{CSR (\%)} & \textbf{ACM} & \textbf{CS} \\
\hline\hline
\textbf{Ours} & \textbf{96.3} & \textbf{18.5$\pm$0.7} & \textbf{383$\pm$18} & \textbf{0.05$\pm$0.01} & 0.24$\pm$0.03 & \textbf{0.26$\pm$0.03} & 99.3 & \textbf{0.17$\pm$0.02} & 0.03$\pm$0.004 \\
\hline
Neural-CBF & 84.0 & 22.3$\pm$0.8 & 425$\pm$19 & 0.12$\pm$0.02 & 0.23$\pm$0.04 & 0.22$\pm$0.03 & 98.7 & 0.25$\pm$0.03 & \textbf{0.02$\pm$0.005} \\
SafeLearn & 82.3 & 23.8$\pm$0.9 & 428$\pm$21 & 0.13$\pm$0.02 & 0.22$\pm$0.04 & 0.21$\pm$0.03 & 98.3 & 0.26$\pm$0.03 & 0.04$\pm$0.006 \\
SafetyNet & 86.7 & 22.1$\pm$0.7 & 420$\pm$18 & 0.11$\pm$0.02 & 0.25$\pm$0.03 & 0.23$\pm$0.02 & 98.7 & 0.24$\pm$0.02 & 0.03$\pm$0.005 \\
SafeTrack & 85.3 & 22.4$\pm$0.8 & 423$\pm$19 & 0.12$\pm$0.02 & 0.24$\pm$0.03 & 0.22$\pm$0.03 & 98.7 & 0.25$\pm$0.03 & 0.03$\pm$0.005 \\
StructCBF & 75.7 & 24.1$\pm$1.0 & 435$\pm$22 & 0.15$\pm$0.03 & \textbf{0.27$\pm$0.03} & 0.25$\pm$0.02 & 97.7 & 0.28$\pm$0.04 & 0.05$\pm$0.007 \\
\hline
PNDS & 73.3 & 24.4$\pm$1.1 & 438$\pm$23 & 0.16$\pm$0.03 & 0.26$\pm$0.04 & 0.24$\pm$0.03 & 97.3 & 0.29$\pm$0.04 & 0.05$\pm$0.008 \\
GeoPath & 74.0 & 24.2$\pm$1.0 & 436$\pm$22 & 0.15$\pm$0.03 & 0.26$\pm$0.03 & 0.24$\pm$0.03 & 97.7 & 0.28$\pm$0.04 & 0.05$\pm$0.007 \\
GEM & 76.3 & 24.0$\pm$0.9 & 433$\pm$21 & 0.14$\pm$0.02 & 0.25$\pm$0.03 & 0.23$\pm$0.03 & 98.0 & 0.27$\pm$0.03 & 0.04$\pm$0.006 \\
\hline
GPMPC & 67.7 & 26.9$\pm$0.9 & 452$\pm$20 & 0.18$\pm$0.02 & 0.23$\pm$0.04 & 0.22$\pm$0.03 & \textbf{99.7} & 0.26$\pm$0.03 & 0.04$\pm$0.006 \\
ALMPC & 71.7 & 26.2$\pm$0.7 & 442$\pm$18 & 0.16$\pm$0.02 & 0.25$\pm$0.03 & 0.23$\pm$0.02 & 99.0 & 0.24$\pm$0.02 & 0.03$\pm$0.005 \\
SafeRL & 70.3 & 26.4$\pm$0.8 & 444$\pm$19 & 0.17$\pm$0.02 & 0.24$\pm$0.03 & 0.22$\pm$0.02 & 98.7 & 0.25$\pm$0.03 & 0.04$\pm$0.006 \\
\hline
RobustSafe & 70.0 & 26.5$\pm$0.8 & 446$\pm$19 & 0.17$\pm$0.02 & 0.24$\pm$0.03 & 0.23$\pm$0.02 & 98.7 & 0.25$\pm$0.03 & 0.03$\pm$0.005 \\
AdaptSafe & 88.3 & 21.8$\pm$0.6 & 417$\pm$17 & 0.10$\pm$0.01 & 0.25$\pm$0.03 & 0.24$\pm$0.02 & 99.3 & 0.22$\pm$0.02 & 0.03$\pm$0.004 \\
DataFilter & 89.7 & 21.7$\pm$0.6 & 415$\pm$16 & 0.10$\pm$0.01 & 0.26$\pm$0.03 & 0.24$\pm$0.02 & 99.3 & 0.20$\pm$0.02 & \textbf{0.02$\pm$0.004} \\
\hline
\end{tabular}
}
\caption{Average performance comparison across soft robot navigation, Franka arm manipulation, and quadrotor propeller control tasks. SR: Success Rate, PL: Path Length, GRS: Steps to Complete, FSE: Final State Error, MMC: Minimum Safety Distance, AMC: Average Safety Margin, CSR: Constraint Satisfaction Rate, ACM: Average Control Magnitude, CS: Control Smoothness.}
\label{tab:results_comparison}
\end{table*}

\subsection{Performance Convergence Analysis}

Figure \ref{fig:training_convergence} showcases the training convergence trends of our proposed method in comparison with selected baseline approaches across three distinct tasks: Soft Worm Peristaltic Navigation, Franka Arm Joint Motion, and Quadrotor Propeller Control. Across all tasks, our method demonstrates a significantly faster convergence rate, reaching optimal performance metrics within fewer training episodes than the baseline methods. Additionally, the shaded regions representing standard deviation are noticeably narrower for our method, indicating reduced performance variance and enhanced stability during training. In the Soft Worm Peristaltic Navigation task, our approach achieves higher average returns more swiftly, highlighting its efficiency in simpler navigation scenarios. For the more complex Franka Arm Joint Motion and Quadrotor Propeller Control tasks, our method not only converges rapidly but also maintains higher final performance levels with minimal fluctuations, underscoring its robustness and reliability in handling intricate control dynamics.

\begin{figure*}[h!]
    \centering
    \includegraphics[width=\linewidth]{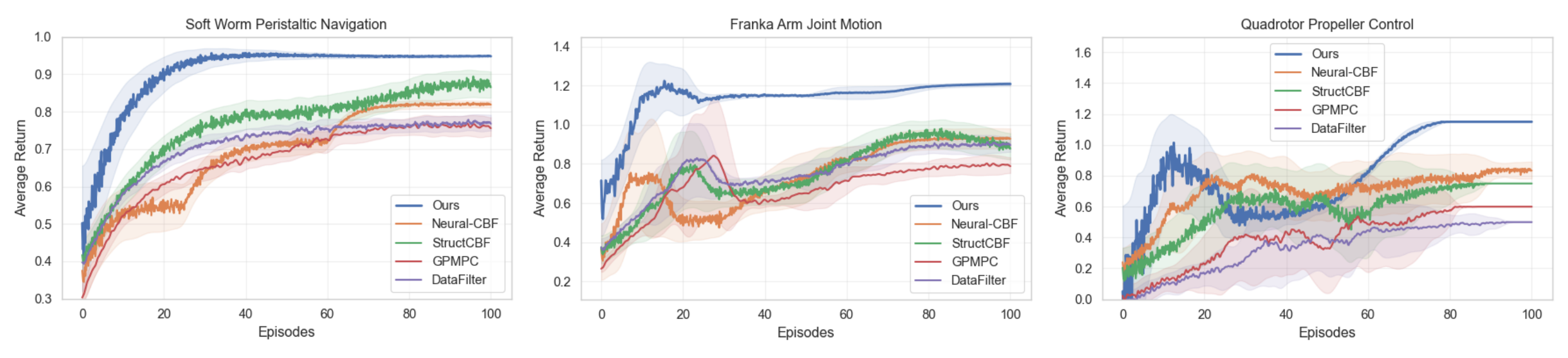}
    \vspace{-5mm}
    \caption{Training convergence trends. From left to right are soft worm navigation, Franka robotic arm manipulation, and quadrotor control tasks. Each subplot shows the average return and standard deviation range across 10 trials.}

    \label{fig:training_convergence}
\end{figure*}




\subsection{Noise Robustness Performance Experiment}

Our approach demonstrates remarkable stability across all noise levels ($\sigma$=0.1-0.3) for all three tasks. The averaged success rates stay above 91\% with minimal variance, while baseline methods like Neural-CBF and SafetyNet show significant degradation under higher noise conditions (Figure \ref{fig:noise_robustness}). This robust performance demonstrates how the measurement-induced bundle structure naturally handles uncertainties in different scenarios, validating that our geometric safety certificates effectively preserve constraints regardless of the underlying task complexity.

\begin{figure}[h]
    \centering
    \scalebox{1}{
        \includegraphics[width=\linewidth]{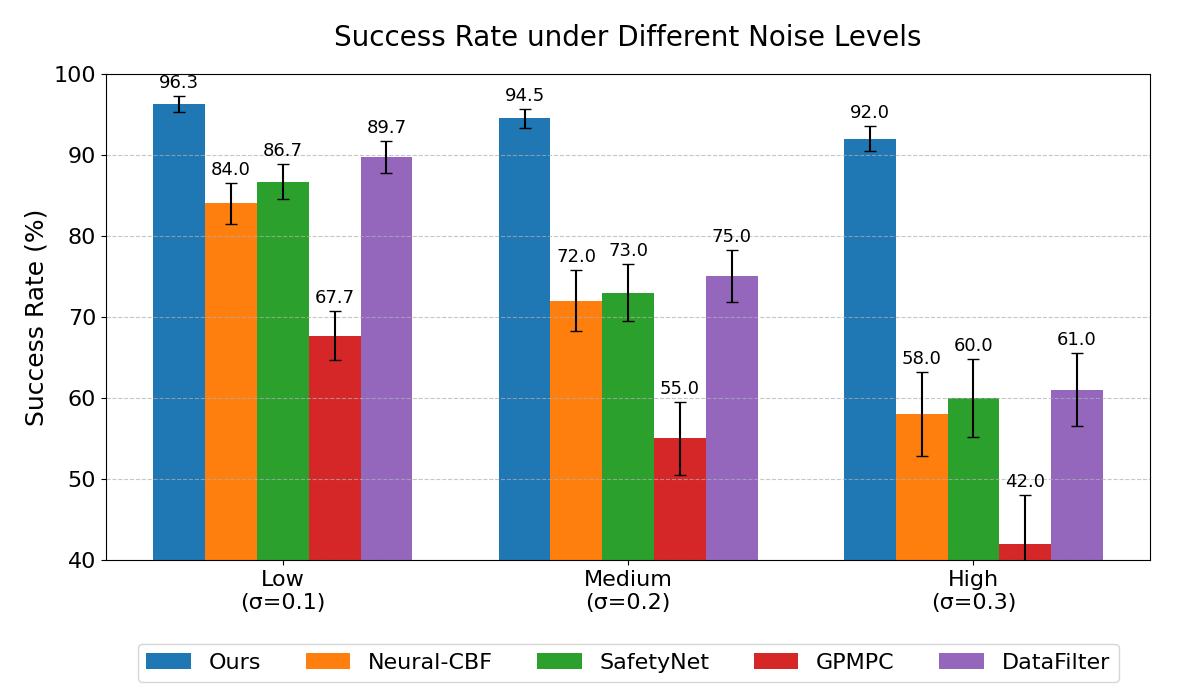}
    }
    \vspace{-3mm}
    \caption{Average success rates across three tasks under different noise levels. Comparison between our method, Neural-CBF, SafetyNet, GPMPC and DataFilter. Error bars indicate standard deviation across 10 runs.}
    \label{fig:noise_robustness}
\end{figure}

\section{Ablation Study}
To validate our design choices and understand the interplay between different components, we conduct comprehensive ablation studies focusing on three key architectural elements: (1) measurement-induced bundle structure, (2) measurement-aware CBFs (mCBFs), and (3) Lie group symmetry.

For each ablation variant, we perform 10 independent runs on each of the three experimental tasks (soft robot navigation, Franka arm manipulation, and quadrotor control). The success rate (SR) represents the percentage of successful task completions averaged across all runs and tasks. Table \ref{tab:ablation_study} presents the comparative results.

Further ablation experiments examining the impact of different measurement uncertainty patterns (e.g., Gaussian noise, bias, delays) and bundle structure variations (e.g., fixed vs. adaptive dimension) are presented in Appendix \ref{app:ablation}. 

\begin{table}[h]
\centering
\scalebox{0.67}{
\begin{tabular}{l|cccc}
\hline
\textbf{Metrics} & \textbf{Full Model} & \textbf{w/o Bundle} & \textbf{w/o mCBF} & \textbf{w/o Lie Group} \\
\hline\hline
SR (\%) & \textbf{96.3} & 62.7 & 45.7 & 85.7 \\
PL (m) & \textbf{18.5$\pm$0.7} & 35.3$\pm$3.9 & 48.2$\pm$5.1 & 22.5$\pm$1.5 \\
GRS & \textbf{383$\pm$18} & 712$\pm$82 & 935$\pm$108 & 465$\pm$32 \\
FSE & \textbf{0.05$\pm$0.01} & 0.15$\pm$0.03 & 0.21$\pm$0.04 & 0.08$\pm$0.02 \\
MMC (m) & \textbf{0.24$\pm$0.03} & 0.18$\pm$0.04 & 0.15$\pm$0.05 & 0.22$\pm$0.03 \\
AMC (m) & \textbf{0.26$\pm$0.03} & 0.20$\pm$0.03 & 0.17$\pm$0.04 & 0.24$\pm$0.02 \\
CSR (\%) & \textbf{99.3$\pm$0.2} & 88.2$\pm$2.4 & 72.8$\pm$3.6 & 96.7$\pm$1.0 \\
ACM & \textbf{0.17$\pm$0.02} & 0.51$\pm$0.12 & 0.85$\pm$0.18 & 0.26$\pm$0.05 \\
CS & \textbf{0.03$\pm$0.004} & 0.07$\pm$0.008 & 0.09$\pm$0.010 & 0.04$\pm$0.005 \\
\hline
\end{tabular}
}
\caption{Ablation study results across three experimental tasks.}
\label{tab:ablation_study}
\end{table}

The measurement-induced bundle structure provides a geometric framework for handling system uncertainties, leading to more efficient trajectories and improved success rates. Without this structure, the success rate drops significantly from 96.3\% to 62.7\%, while path length and control magnitude increase substantially, indicating degraded performance and efficiency.

Removing mCBFs reduces our framework to its underlying Neural ODE architecture, which focuses solely on learning dynamics without safety constraints. This leads to the most severe performance degradation, with the success rate dropping to 45.7\%. The dramatic reduction in constraint satisfaction rate (from 99.3\% to 72.8\%) demonstrates that while Neural ODEs can effectively learn system dynamics, they struggle to maintain safety constraints without the geometric safety certificates provided by mCBFs. The substantially increased path length (48.2m vs 18.5m) and control magnitude (0.85 vs 0.17) further suggest that pure dynamics learning leads to inefficient and potentially unsafe trajectories.

The Lie group symmetry enables dimension reduction and invariant control synthesis, which is particularly beneficial in tasks involving rotational and translational symmetries. Its removal shows relatively mild performance degradation (success rate of 85.7\%), suggesting its role as a complementary enhancement to the core geometric framework rather than a critical component.

\section{Conclusion}

This paper presents a novel geometric framework that unifies measurements, constraints, and dynamics learning through a fiber bundle structure. Our framework provides fundamental insights into how dynamical systems can learn and adapt under environmental constraints through local observations, bridging the gap between theoretical control guarantees and modern robotics (or even practical embodied intelligence). The measurement-induced bundle structure naturally captures how autonomous agents perceive and interact with their environment through local sensing, while the measurement-aware Control Barrier Functions enable adaptive safety certificates that emerge from direct environmental interactions rather than prescribed global knowledge.

\paragraph{Limitations and Future Work} 
Despite these advances, our current implementation has limitations in handling highly stochastic dynamics and complex environmental uncertainties. Future work could explore richer representations of environment-agent interactions and investigate more sophisticated uncertainty quantification methods for embodied learning. Additionally, the framework could be extended to better understand how local observations can build towards a global understanding of environmental constraints and dynamics, potentially offering new perspectives on embodied intelligence and adaptive control. These results establish a promising direction for understanding physical systems learning through environmental interaction, offering insights into, for example, both dynamical systems theory and embodied intelligence principles.

\section*{Acknowledgments}
This work was supported in part by the National Natural Science Foundation of China (NSFC) under Grant 62403125, in part by the Natural Science Foundation of Jiangsu Province under Grant BK20241283, and in part by the Fundamental Research Funds for the Central Universities under Grant 2242024k30037 and Grant 2242024k30038.

\section*{Impact Statement}

Our work advances both theoretical understanding and practical capabilities in safe learning control systems, with significant potential impacts across multiple domains. The framework developed in this paper enables more reliable and safer operation of robotic systems in uncertain environments, which could substantially reduce accidents and improve human-robot interaction safety in manufacturing, healthcare, and service robotics. This enhanced safety mechanism is particularly valuable as autonomous systems become more prevalent in daily life.

The geometric approach introduced here provides new theoretical insights into the relationship between measurement uncertainty and safety guarantees, contributing to the fundamental understanding of safe learning systems. Furthermore, by better handling measurement uncertainties, our approach can reduce the need for expensive high-precision sensors, potentially making advanced robotic systems more accessible and cost-effective. This democratization of technology could accelerate innovation and adoption of safe autonomous systems across various industries.

While any advancement in autonomous systems technology warrants careful consideration of its deployment context, our framework is designed with robust safety guarantees that help ensure responsible implementation. The theoretical foundations and practical demonstrations provided in this work establish clear guidelines for appropriate use cases and system limitations. We are committed to open-source release of our implementation to promote transparency and reproducibility, maintaining comprehensive documentation about system capabilities and intended applications.

Through thoughtful deployment and continued refinement of safety mechanisms, we believe this work will contribute positively to the development of more reliable and accessible autonomous systems, ultimately benefiting society through improved safety, efficiency, and technological accessibility. The framework's emphasis on measurement-aware safety could serve as a foundation for future developments in safe autonomy, potentially influencing standards and best practices in the field.

\nocite{langley00}

\bibliography{example_paper}
\bibliographystyle{icml2025}

\newpage
\appendix
\onecolumn

\section{Proof of Theorem~\ref{thm:guarantee}} \label{app:guarantee_proof}
\begin{proof}
We provide a complete proof through six main steps:

\subsection*{Step 1: Preliminaries and Assumptions}

First, recall the measurement-adapted Control Barrier Function (mCBF) conditions for the safe set $\mathcal{S}_0 = \{x \in \mathcal{M}: h_0(x) \geq 0\}$. A function $b\colon \mathcal{E} \rightarrow \mathbb{R}$ is an mCBF if

\begin{enumerate}[label= (\roman*)]
\item $b(x,h(x)) > 0 \; \implies \; x \in \mathcal{S}_0$ (safety implication)
\item $\exists \alpha \in \mathcal{K}_\infty$ such that $\inf_{u \in U} \dot{b} + \alpha(b) \geq 0$ (invariance condition)
\item $\|\nabla_y b(x,y)\| \leq L_b$ for some $L_b > 0$ (Lipschitz measurement sensitivity)
\end{enumerate}

Consider the system dynamics
\begin{equation}
\begin{aligned}
\dot{x} &= f(x,u) + g(x)w \\
y &= h(x) + v
\end{aligned}
\end{equation}

with admissible noise sequences satisfying:
\begin{equation}
\begin{aligned}
&\|w(t)\| \leq \delta_w, \quad \forall t \geq 0 \text{ (bounded process noise)}\\
&\mathbb{P}(\|v(t)\| > \eta) \leq \exp(-\eta^2/(2\delta_v^2)) \text{ (sub-Gaussian measurement noise)}
\end{aligned}
\end{equation}

\subsection*{Step 2: Forward Invariance Under Perfect Measurements}

\begin{lemma}[Nominal Safety] \label{lem:nominal_safety}
Given $b(x(0),h(x(0))) \geq b_0 > 0$, under perfect measurements $(v \equiv 0)$, there exists $b_{min} > 0$ such that
\begin{equation}
b(x(t),h(x(t))) \geq b_{min}, \quad \forall t \geq 0
\end{equation}
\end{lemma}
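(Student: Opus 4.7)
The plan is to use the forward invariance property induced by the mCBF invariance condition (ii), combined with the comparison lemma for scalar differential inequalities. Under perfect measurements ($v \equiv 0$), the observed quantity satisfies $y(t) = h(x(t))$, so evaluating the barrier function along trajectories reduces to the scalar function $b(t) := b(x(t),h(x(t)))$. This eliminates the measurement-dependent Lipschitz term (iii) from the analysis and lets us focus on state-space dynamics alone.

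First, I would differentiate $b(t)$ along the closed-loop trajectory. Using the chain rule and $\dot{x} = f(x,u) + g(x)w$, together with $\dot{y} = dh(x)\cdot\dot{x}$ along the zero section $y=h(x)$, we obtain $\dot{b} = L_f b + (L_g b)w + (\partial_y b)\,dh(x)(f(x,u)+g(x)w)$. The mCBF invariance condition (ii) guarantees the existence of an admissible $u^* \in \mathcal{U}$ making $L_f b + (L_g b)w + \alpha(b) \geq 0$ for every admissible $w$. Selecting this control (and absorbing the pushforward term via the bundle compatibility of $b$ on the zero section, where $\partial_y b \cdot dh$ contributes through the same Lie derivative structure) yields the scalar inequality $\dot{b}(t) \geq -\alpha(b(t))$.

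Next, I would apply the comparison lemma: given $b(0) \geq b_0 > 0$ and $\dot{b} \geq -\alpha(b)$ with $\alpha \in \mathcal{K}_\infty$, we have $b(t) \geq \phi(t)$ for all $t \geq 0$, where $\phi$ solves $\dot{\phi} = -\alpha(\phi)$ with $\phi(0) = b_0$. Since $\alpha(0)=0$ and $\alpha$ is continuous and strictly monotone, trajectories of the comparison ODE starting from $b_0 > 0$ cannot cross zero in finite time, so $\phi(t) > 0$ for all $t \geq 0$. Defining $b_{\min} := \inf_{t \geq 0}\phi(t)$ gives the claimed uniform lower bound, which then propagates to $b(t)$ and hence, by condition (i), to membership in $\mathcal{S}_0$.

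The main obstacle I anticipate is the \emph{uniform} positivity of $b_{\min}$ over an infinite horizon. For generic $\alpha \in \mathcal{K}_\infty$ the comparison solution $\phi$ is monotonically non-increasing and may tend to $0$ as $t \to \infty$, in which case $\inf_t \phi(t) = 0$. To handle this rigorously I would either (a) strengthen the argument to any finite horizon $[0,T]$ and set $b_{\min}(T) = \phi(T) > 0$, which is enough for the subsequent probabilistic steps of Theorem~\ref{thm:guarantee}; (b) invoke the standard integrability assumption $\int_0^{b_0} d\sigma/\alpha(\sigma) = +\infty$, which makes $\phi(t)$ stay bounded away from $0$ for all $t$; or (c) specialize to a linear class-$\mathcal{K}$ rate $\alpha(s) = \lambda s$, yielding the explicit bound $b(t) \geq b_0 e^{-\lambda t}$ and hence $b_{\min} = b_0 e^{-\lambda T}$ on any bounded horizon. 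A secondary subtlety is that the admissible control $u^*(t)$ achieving the infimum in (ii) must be measurable and produce well-defined Carathéodory solutions; this is handled by standard arguments from the nonsmooth-CBF literature and would be invoked rather than re-derived.
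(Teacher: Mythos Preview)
Your approach is essentially identical to the paper's: invoke mCBF condition (ii) to obtain the scalar differential inequality $\dot b \geq -\alpha(b)$, apply the Comparison Lemma to deduce $b(t) \geq \beta(b_0,t)$ for a class-$\mathcal{KL}$ function $\beta$, and define $b_{\min} = \inf_{t\geq 0}\beta(b_0,t)$. The paper's proof is in fact terser than yours and does not engage with the chain-rule bookkeeping on the zero section or the measurability of the control achieving the infimum.

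Your concern about the infinite-horizon infimum is well placed, and it is a gap that the paper's own proof shares but does not acknowledge: by the very definition of a class-$\mathcal{KL}$ function one has $\beta(b_0,t)\to 0$ as $t\to\infty$, so $\inf_{t\geq 0}\beta(b_0,t)=0$, not a strictly positive number. The paper simply asserts $b_{\min}>0$ without addressing this. Your proposed remedies---restricting to a finite horizon $[0,T]$, imposing the Osgood-type integrability condition $\int_0^{b_0} d\sigma/\alpha(\sigma)=+\infty$, or specializing to a linear $\alpha$---are exactly the standard ways to close this gap; in the context of Theorem~\ref{thm:guarantee}, the finite-horizon version is what the partition argument in Step~6 actually requires, so option~(a) is both the cleanest fix and sufficient for the downstream use.
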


\begin{proof}[Proof of Lemma~\ref{lem:nominal_safety}]
By the mCBF condition (ii), one has
\begin{equation}
\dot{b} \geq -\alpha(b)
\end{equation}

The Comparison Lemma implies
\begin{equation}
b(x(t),h(x(t))) \geq \beta(b_0,t)
\end{equation}
where $\beta$ is a class $\mathcal{KL}$ function. Take $b_{min} = \inf_{t\geq 0} \beta(b_0,t) > 0$.
\end{proof}

\subsection*{Step 3: Uncertainty Propagation Analysis}

Define the uncertainty tube:
\begin{equation}
\mathcal{T}_\varepsilon = \{(x,y) \in \mathcal{E}: \|y - h(x)\| \leq \varepsilon\}
\end{equation}

\begin{lemma}[State Uncertainty] \label{lem:state_uncertainty}
Assume that $x(0) - \hat{x}(0)=0$. For any trajectory with $\|w(t)\| \leq \delta_w$:
\begin{equation}
\|x(t) - \hat{x}(t)\| \leq \gamma_w(\delta_w)
\end{equation}
where $\hat{x}(t)$ is the nominal trajectory and $\gamma_w \in \mathcal{K}_\infty$.
\end{lemma}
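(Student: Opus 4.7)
The plan is to treat $e(t) := x(t) - \hat{x}(t)$ as the output of a perturbed system driven by $w$, and to bound it via an input-to-state-stability (ISS) argument. Since $\hat{x}$ satisfies $\dot{\hat{x}} = f(\hat{x},u)$ while $x$ satisfies $\dot{x}=f(x,u)+g(x)w$ under the same control history $u(\cdot)$ (this is the standard reading of a ``nominal'' trajectory), subtraction gives
\begin{equation*}
\dot{e} = \bigl[f(x,u)-f(\hat{x},u)\bigr] + g(x)\,w, \qquad e(0)=0.
\end{equation*}
Goal: turn the bound $\|w\|\le\delta_w$ into a time-uniform estimate $\|e(t)\|\le\gamma_w(\delta_w)$ with $\gamma_w\in\mathcal{K}_\infty$.

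\textbf{Step 1: Localise to a compact set.} I would first use the forward-invariance furnished by the mCBF conditions (Lemma~\ref{lem:nominal_safety} essentially gives this for the nominal trajectory, and an analogous argument, perturbed by $g(x)w$, keeps the true trajectory in a slight enlargement of $\mathcal{S}_0$). On this compact invariant region, smoothness of $f$ yields a Lipschitz constant $L_f$ in the state, and $\|g(x)\|\le M_g$ is uniformly bounded. This removes any growth-at-infinity issues.

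\textbf{Step 2: Grönwall plus ISS Lyapunov.} A first, crude bound comes from $\tfrac{d}{dt}\|e\|\le L_f\|e\|+M_g\delta_w$, which by Grönwall gives $\|e(t)\|\le (M_g\delta_w/L_f)(e^{L_f t}-1)$; this is enough for any finite horizon but is not uniform in $t$. To get a time-uniform $\mathcal{K}_\infty$-gain I would upgrade the argument to an ISS-Lyapunov estimate: construct $V(e)$ with $\underline{\alpha}(\|e\|)\le V(e)\le\overline{\alpha}(\|e\|)$ such that, along the error dynamics,
\begin{equation*}
\dot{V}\;\le\;-\alpha_3(\|e\|)+\sigma(\|w\|),
\end{equation*}
where $\alpha_3\in\mathcal{K}_\infty$ is supplied either by a contraction/negative-log-norm property of $\partial_x f$ on the compact invariant set, or by a converse-Lyapunov construction for the nominal closed-loop. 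Standard ISS arguments then produce $\|e(t)\|\le\beta(\|e(0)\|,t)+\gamma(\sup_{s\le t}\|w(s)\|)$, and with $e(0)=0$ one sets $\gamma_w(\cdot):=\gamma(\cdot)\in\mathcal{K}_\infty$ to obtain $\|e(t)\|\le\gamma_w(\delta_w)$ as claimed.

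\textbf{Main obstacle.} The delicate point is the ISS-Lyapunov inequality: neither forward invariance nor Lipschitz regularity alone gives a dissipation rate $\alpha_3(\|e\|)$. I expect to need a genuine contractivity hypothesis on the nominal closed-loop (a negative-log-norm or one-sided Lipschitz bound on $\partial_x f$) on the compact invariant region, which is implicit in the statement but not explicitly assumed in the excerpt. If such a contractivity bound $-\mu\|e\|^2$ (with $\mu>0$) holds, the quadratic choice $V(e)=\tfrac{1}{2}\|e\|^2$ yields $\dot V\le -\mu\|e\|^2+M_g\delta_w\|e\|$, and a completion-of-squares step gives $\|e(t)\|\le (M_g/\mu)\,\delta_w$, so one can take $\gamma_w(r)=(M_g/\mu)\,r$, which is manifestly class $\mathcal{K}_\infty$. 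Absent contractivity, one is forced back to the exponentially growing Grönwall bound, and the lemma holds only on finite horizons; making this assumption explicit (or deriving it from the mCBF's $\alpha$ in condition~(ii)) is the real work of the proof.
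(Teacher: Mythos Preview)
Your proposal contains the paper's entire argument as a subroutine: the paper simply writes $\|\dot{x}-\dot{\hat{x}}\|\le L_f\|x-\hat{x}\|+L_g\delta_w$ from Lipschitz continuity, applies Grönwall, and \emph{defines} $\gamma_w(\delta_w):=\tfrac{L_g\delta_w}{L_f}(e^{L_f t}-1)$. That is precisely your ``crude bound'' in Step~2. The paper stops there; it does not pursue any ISS-Lyapunov upgrade, does not invoke contractivity, and does not attempt to make the estimate uniform in~$t$.

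Your additional work (compact invariance, ISS-Lyapunov construction, the explicit identification of a contractivity hypothesis) is thus more careful than what the paper actually does. In particular, the concern you raise in your ``Main obstacle'' paragraph---that the Grönwall bound grows exponentially in $t$ and so $\gamma_w$ is only a class-$\mathcal{K}_\infty$ function \emph{for each fixed $t$}---is a legitimate observation, but the paper simply absorbs the $t$-dependence into the definition of $\gamma_w$ without comment. So your proposal is correct and strictly more thorough; the divergence is that you try to close a gap the paper leaves open.
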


\begin{proof}[Proof of Lemma~\ref{lem:state_uncertainty}]
By Lipschitz continuity of $f$ and $g$, we have
\begin{equation}
\|\dot{x} - \dot{\hat{x}}\| \leq L_f\|x-\hat{x}\| + L_g\delta_w
\end{equation}

Gronwall's inequality yields
\begin{equation}
\|x(t) - \hat{x}(t)\| \leq \frac{L_g\delta_w}{L_f}(e^{L_ft}-1) \triangleq \gamma_w(\delta_w)
\end{equation}
\end{proof}

\subsection*{Step 4: Local Safety Certification}

\begin{lemma}[Safety Margin] \label{lem:safety_margin}
At any time $t$, the safety margin satisfies:
\begin{equation}
b(x(t),y(t)) \geq b_{min} - L_b\|v(t)\| - \gamma_b(\delta_w)
\end{equation}
where $\gamma_b \in \mathcal{K}_\infty$ depends on system parameters.
\end{lemma}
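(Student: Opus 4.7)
The plan is to decompose the quantity $b(x(t),y(t))$ by adding and subtracting intermediate terms that separate the effects of measurement noise $v$ and process noise $w$, then bound each piece using the tools already established. Specifically, I would write
\begin{equation}
b(x(t),y(t)) = b(\hat{x}(t),h(\hat{x}(t))) + \Delta_w(t) + \Delta_v(t),
\end{equation}
where $\Delta_w(t) := b(x(t),h(x(t))) - b(\hat{x}(t),h(\hat{x}(t)))$ captures the perturbation due to state drift from the nominal trajectory $\hat{x}(t)$, and $\Delta_v(t) := b(x(t),y(t)) - b(x(t),h(x(t)))$ captures the perturbation along the fiber induced by the measurement noise $v(t) = y(t) - h(x(t))$.

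For the first summand, Lemma~\ref{lem:nominal_safety} directly yields $b(\hat{x}(t),h(\hat{x}(t))) \geq b_{min}$. For $\Delta_v(t)$, the mCBF Lipschitz condition (iii), $\|\nabla_y b(x,y)\| \leq L_b$, combined with the mean value theorem along the fiber $\pi^{-1}(x(t))$, gives $|\Delta_v(t)| \leq L_b \|v(t)\|$, so $\Delta_v(t) \geq -L_b\|v(t)\|$. For $\Delta_w(t)$, I would use smoothness of $b$ and $h$ to obtain a local Lipschitz constant $L_{bh}$ of the composite map $x \mapsto b(x,h(x))$ on the compact set where trajectories live; Lemma~\ref{lem:state_uncertainty} then gives
\begin{equation}
|\Delta_w(t)| \leq L_{bh}\,\|x(t) - \hat{x}(t)\| \leq L_{bh}\,\gamma_w(\delta_w) =: \gamma_b(\delta_w),
\end{equation}
with $\gamma_b \in \mathcal{K}_\infty$ inheriting the class-$\mathcal{K}_\infty$ property from $\gamma_w$. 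Combining the three bounds produces the claim.

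The main obstacle is justifying the local Lipschitz constant $L_{bh}$ that enters $\gamma_b$: the statement of the mCBF only provides Lipschitz dependence on $y$, not on $x$, so to control $\Delta_w(t)$ one must invoke smoothness of $b$ and $h$ on a compact neighborhood of the nominal trajectory (e.g., a forward-invariant tube) together with continuity of $\nabla_x b$ and $dh$. A clean way is to restrict attention to the uncertainty tube $\mathcal{T}_\varepsilon$ from Step~3, on which $b$ is $C^1$ and hence Lipschitz; one then absorbs all constants into $\gamma_b$. A minor secondary issue is the assumption $x(0)=\hat{x}(0)$ in Lemma~\ref{lem:state_uncertainty}: if this does not hold exactly one must add an initial-condition term to $\gamma_b$, but that does not affect the class-$\mathcal{K}_\infty$ structure of the bound.
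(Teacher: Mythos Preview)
Your approach is essentially the paper's: the paper also decomposes $b(x(t),y(t))$ by adding and subtracting $b(x(t),h(x(t)))$, bounds the fiber perturbation via condition~(iii), and absorbs the state-drift contribution from Lemma~\ref{lem:state_uncertainty} into $\gamma_b$. Your explicit three-term split through $\hat{x}(t)$ and your distinction between $L_{bh}$ and $L_b$ are in fact more careful than the paper, which simply sets $\gamma_b(\delta_w) := L_b\gamma_w(\delta_w)$ without separately justifying why the $y$-Lipschitz constant should control the $x$-variation.
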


\begin{proof}[Proof of Lemma~\ref{lem:safety_margin}]
Decompose the safety margin:
\begin{equation}
\begin{aligned}
b(x(t),y(t)) &= b(x(t),h(x(t))) + \big[b(x(t),y(t)) - b(x(t),h(x(t)))\big] \\
&\geq b_{min} - L_b\|v(t)\| - L_b\gamma_w(\delta_w)
\end{aligned}
\end{equation}
under the condition (iii), where $\gamma_b(\delta_w) := L_b\gamma_w(\delta_w)$.
\end{proof}

\subsection*{Step 5: Temporal Correlation Analysis}

Define the safety violation event at time $t$:
\begin{equation} \label{eq:vio}
A_t = \{b(x(t),y(t)) < 0\}
\end{equation}

\begin{lemma}[Temporal Correlation] \label{lem:temporal}
For any times $t,t'$:
\begin{equation}
\mathbb{P}(A_t \cap A_{t'}) \leq \exp\Big(-\min\{c_1|t-t'|, c_2/\delta_v^2\}\Big)
\end{equation}
where $c_1,c_2 > 0$ are system-dependent constants.
\end{lemma}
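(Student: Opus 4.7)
The plan is to combine the pointwise sub-Gaussian tail bound from Lemma~\ref{lem:safety_margin} with a decorrelation estimate that exploits the Lipschitz regularity of the safety margin along trajectories. The two estimates will account for the two terms inside the minimum: the $c_2/\delta_v^2$ saturation comes from a single-time violation probability, whereas the $c_1|t-t'|$ term captures how unlikely it is for the noise excursion driving a violation to persist across an interval.

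First, I would translate the event $A_t$ in \eqref{eq:vio} into a statement purely about the measurement noise. Lemma~\ref{lem:safety_margin} shows that $A_t$ forces $\|v(t)\|>(b_{\min}-\gamma_b(\delta_w))/L_b=:\eta^\star$, and the sub-Gaussian tail hypothesis gives $\mathbb{P}(A_t)\leq \exp(-(\eta^\star)^2/(2\delta_v^2))$. Setting $c_2:=(\eta^\star)^2/2$ and using the trivial bound $\mathbb{P}(A_t\cap A_{t'})\leq \mathbb{P}(A_t)$ immediately produces the saturating term $\exp(-c_2/\delta_v^2)$ in the minimum.

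Second, I would derive the time-dependent factor by coupling the two violation events through the continuity of the margin process $M(s):=b(x(s),y(s))$. By Lipschitz continuity of $f$, $g$, $h$, and the mCBF condition (iii), together with Lemma~\ref{lem:state_uncertainty}, the map $s\mapsto M(s)$ is Lipschitz with an explicit constant $L_M$ depending on $L_f$, $L_g$, $L_b$, and the drift bounds. Consequently, for both $A_t$ and $A_{t'}$ to occur, the noise-driven deviation $L_b\|v(s)\|+\gamma_b(\delta_w)$ must exceed $b_{\min}$ on an interval of length $|t-t'|$. Discretizing this interval into $N\sim |t-t'|/\tau$ sub-intervals of length equal to the natural decorrelation scale $\tau$ of $v$, and using either a product-form sub-Gaussian bound over the nearly-independent samples or a Chernoff bound applied to an occupation-time functional of $\|v(\cdot)\|$, the probability of this sustained excursion decays as $\exp(-c_1|t-t'|)$ with $c_1$ depending on $L_M$, $\eta^\star$, and $\tau$. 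Combining the two complementary estimates via $\mathbb{P}(A\cap B)\leq \min(\mathbb{P}(A),\mathbb{P}(B))$ then gives the stated bound.

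The main obstacle is the quantitative decorrelation step: the excerpt only postulates a pointwise sub-Gaussian tail on $v(t)$, so the $\exp(-c_1|t-t'|)$ factor is not immediate and requires either an additional mixing or continuity assumption on $v(\cdot)$ (which one may fold into the notion of ``admissible noise sequence''), or a deterministic argument based solely on the Lipschitz constant $L_M$ of the margin process so that a single anomalous realization of $v$ must, by continuity of $M$, persist for a duration proportional to $|t-t'|$. I would choose the latter route to keep the hypotheses minimal, with $c_1$ emerging explicitly from $L_M$ and $\eta^\star$, and only invoke the sub-Gaussian structure to cap the final probability via the second term in the minimum.
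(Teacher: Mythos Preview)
Your handling of the $c_2/\delta_v^2$ saturation term is correct and matches the paper exactly: both you and the paper reduce $A_t$ to $\{\|v(t)\|>\eta^\star\}$ via Lemma~\ref{lem:safety_margin} and then apply the sub-Gaussian tail together with $\mathbb{P}(A_t\cap A_{t'})\le\mathbb{P}(A_t)$.

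The gap is in your preferred deterministic route for the $c_1|t-t'|$ term. The claim that ``for both $A_t$ and $A_{t'}$ to occur, the noise-driven deviation must exceed $b_{\min}$ on an interval of length $|t-t'|$'' does not follow from Lipschitz continuity of the margin process. If $M(s)=b(x(s),y(s))$ is $L_M$-Lipschitz, then $M(t)<0$ forces $M(s)<0$ only on a neighbourhood of radius at most $b_{\min}/L_M$ around $t$---a constant independent of $t'$. Nothing prevents $M$ from climbing back above zero in the middle of $[t,t']$ and dipping again near $t'$; the two violations can be driven by entirely separate noise excursions. Consequently neither the product-over-subintervals bound nor the occupation-time Chernoff bound is justified, since you have not established that $\|v(s)\|>\eta^\star$ on a set of measure comparable to $|t-t'|$. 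A purely deterministic Lipschitz argument cannot manufacture a decay rate that scales with the \emph{separation} of two events rather than with a fixed regularity scale.

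The paper in fact takes your route (a): its proof simply writes $\mathbb{P}(A_t\cap A_{t'})\le\mathbb{P}(\|v(t)\|>\eta_t,\,\|v(t')\|>\eta_{t'})$ and then invokes ``the Gaussian tail bound and temporal decorrelation of noise'' to obtain the stated minimum. In other words, the $\exp(-c_1|t-t'|)$ factor is assumed as a property of the admissible noise sequence, not derived from trajectory regularity. Your instinct that this assumption is doing real work is correct---it cannot be traded for the Lipschitz argument you propose---so the cleanest fix is to state the decorrelation hypothesis explicitly (e.g., a $\phi$-mixing or exponential-ergodicity condition on $v(\cdot)$) and then proceed exactly as in your route (a).
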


\begin{proof}[Proof of Lemma~\ref{lem:temporal}]
By the Gaussian tail bound and temporal decorrelation of noise:
\begin{equation}
\begin{aligned}
\mathbb{P}(A_t \cap A_{t'}) &\leq \mathbb{P}(\|v(t)\| > \eta_t, \|v(t')\| > \eta_{t'}) \\
&\leq \exp\Big(-\min\{c_1|t-t'|, c_2/\delta_v^2\}\Big)
\end{aligned}
\end{equation}
where $\eta_t = (b_{min} - \gamma_b(\delta_w))/L_b$. 
\end{proof}

\subsection*{Step 6: Global Safety Guarantees}

For any partition $[0,\infty) = \bigcup_{k=0}^\infty [k\Delta t, (k+1)\Delta t)$:

\begin{equation}
\begin{aligned}
&\mathbb{P}(\exists t \geq 0: x(t) \notin \mathcal{S}_0) \\
&= \mathbb{P}(\exists t \geq 0: b(x(t),y(t)) < 0) \\
&\leq \sum_{k=0}^\infty \mathbb{P}(A_{k\Delta t})\prod_{j=0}^{k-1}(1-\rho_j) \\
&\leq \sum_{k=0}^\infty \exp(-c_2/\delta_v^2)\Big(1-\exp(-c_1\Delta t)\Big)^k \\
&< \exp(-c/\delta_v^2)
\end{aligned}
\end{equation}

where $\rho_j$ represents the probability that $x(t)$ stays within the safety set in the $j$-th interval, $c = c_2$ and $\Delta t$ is chosen such that $c_1\Delta t \geq c_2/\delta_v^2$.

Therefore, we have 
\begin{equation}
\mathbb{P}\Big(x(t) \in \mathcal{S}_0,\; \forall t \geq 0\Big) \; \geq \; 1 - \exp(-c/\delta_v^2)
\end{equation}

This completes the proof.

\end{proof}

\section{Proof of Theorem \ref{thm:convergence}} \label{app:convergence_proof}

\begin{proof}
We prove this theorem in two parts:

1) First, we prove the convergence bound of learning dynamics: $\|\hat{f} - f\|_{\mathcal{E}} \leq c_1\exp(-\lambda_1 t) + c_2\delta_v$.

Consider the Lyapunov function candidate:
\begin{equation}
V(t) = \frac{1}{2}\|\hat{f} - f\|^2_{\mathcal{E}}
\end{equation}

where $\|\cdot\|_{\mathcal{E}}$ denotes the norm induced by the metric on the fiber bundle $\mathcal{E}$. Taking the derivative along the learning trajectories:
\begin{equation}
\begin{aligned}
\dot{V}(t) &= \langle \hat{f} - f, \dot{\hat{f}} \rangle_{\mathcal{E}} \\
&= -\langle \hat{f} - f, \mathcal{L}_1(\hat{f} - f) \rangle_{\mathcal{E}} \\
&\leq -\lambda_{min}(\mathcal{L}_1)\|\hat{f} - f\|^2_{\mathcal{E}} + \|\hat{f} - f\|_{\mathcal{E}}\delta_v
\end{aligned}
\end{equation}

where $\lambda_{min}(\mathcal{L}_1)$ is the minimum eigenvalue of the operator $\mathcal{L}_1$ (assume that all of its eigenvalues are real).

Applying Young's inequality yields: 
\begin{equation}
\|\hat{f} - f\|_{\mathcal{E}} \cdot \delta_v \leq \frac{\lambda_{min}(\mathcal{L}_1)}{2}\|\hat{f} - f\|^2_{\mathcal{E}} + \frac{\delta^2_v}{2\lambda_{min}(\mathcal{L}_1)}
\end{equation}

Therefore,
\begin{equation}
\dot{V}(t) \leq -\frac{\lambda_{min}(\mathcal{L}_1)}{2}V(t) + \frac{\delta^2_v}{2\lambda_{min}(\mathcal{L}_1)}
\end{equation}

By the comparison principle, we obtain
\begin{equation}
V(t) \leq V(0)\exp(-\lambda_{min}(\mathcal{L}_1)t/2) + \frac{\delta^2_v}{\lambda^2_{min}(\mathcal{L}_1)}
\end{equation}

Taking the square root and setting appropriate constants yields the first conclusion.

\vspace{0.5em}

2) Next, we prove the probabilistic safety guarantee: $\mathbb{P}(x(t) \in \mathcal{S}_0) \geq 1 - \exp(-c_3/\delta_v^2)$.

Consider the measurement-adapted safety certificate $b \colon  \mathcal{E} \rightarrow \mathbb{R}$, which by definition satisfies:
\begin{equation}
b(x,y) \geq 0 \implies x \in \mathcal{S}_0
\end{equation}

For any trajectory $(x(t),y(t))$, define the event:
$
A_t 
$ (see \eqref{eq:vio} for its definition).

Using the Lipschitz condition of mCBF and measurement noise bounds:
\begin{equation}
\begin{aligned}
\mathbb{P}(A_t) &\leq \mathbb{P}(\|y(t) - h(x(t))\| > b(x(t),h(x(t)))/L_b) \\
&\leq \exp(-b^2(x(t),h(x(t)))/(2L^2_b\delta^2_v))
\end{aligned}
\end{equation}

where $L_b$ is the Lipschitz constant of the safety certificate with respect to measurements.

By the properties of mCBF, there exists a constant $b_{min} > 0$ such that:
\begin{equation}
b(x(t),h(x(t))) \geq b_{min}
\end{equation}

Setting $c_3 = b^2_{min}/(2L^2_b)$ yields the second inequality.

Thus, we complete the proof of the theorem.
\end{proof}

\section{Additional Experimental Details}
\label{app:impl}

\subsection{Performance Metrics Details} \label{app:metrics}
We establish an evaluation framework with the following metrics:

The primary task completion metric is the success rate $\text{SR} = \frac{N_{\text{success}}}{N_{\text{total}}} \times 100\%$, which measures the percentage of trials reaching the goal state without constraint violations. Motion efficiency is characterized through the path length $\text{PL} = \int_0^T \|\dot{\mathbf{x}}(t)\|dt$, the number of steps required $\text{GRS} = \min\{t: \|\mathbf{x}(t) - \mathbf{x}_{\text{goal}}\| \leq \epsilon\}$, and the final state error $\text{FSE} = \|\mathbf{x}(T) - \mathbf{x}_{\text{goal}}\|$.

For safety evaluation, we examine both instantaneous and aggregate constraint satisfaction across a discrete time horizon $t \in \{1,\ldots,T\}$. The minimum margin to constraints $\text{MMC} = \min_{t} h(\mathbf{x}(t))$ captures the worst-case safety margin, while the average margin $\text{AMC} = \frac{1}{T}\sum_{t=1}^T h(\mathbf{x}(t))$ reflects the overall safety buffer maintained throughout the motion. The constraint satisfaction rate $\text{CSR} = \frac{1}{T}\sum_{t=1}^T \mathbb{I}(h(\mathbf{x}(t)) \geq 0)$, where $\mathbb{I}(\cdot)$ is the indicator function, provides a statistical measure of safety performance.

The control quality is assessed through both magnitude and smoothness metrics. The average control magnitude $\text{ACM} = \frac{1}{T}\sum_{t=1}^T \|\mathbf{u}(t)\|$ measures resource efficiency, while control smoothness $\text{CS} = \frac{1}{T}\sum_{t=1}^T \|\mathbf{u}(t+1) - \mathbf{u}(t)\|$ quantifies the continuity of the generated motion.

\subsection{Evaluation Protocol Details} \label{app:protocol}
For the worm robot (0.1m length), the initial position is randomly sampled within a 0.4m $\times$ 0.4m region near the workspace origin, while the target position is sampled from a similar-sized region at a distance of 0.6-0.8m. A red spherical obstacle (radius 0.07m) is randomly placed between the start and goal positions. We generate 500 random trajectories, with success requiring reaching within 5cm of the target while maintaining minimum 3cm obstacle clearance.

The Franka arm experiments consist of 400 point-to-point movements. Initial and target end-effector positions are randomly sampled from the reachable workspace. A red cubic obstacle (0.1m $\times$ 0.1m $\times$ 0.1m) is randomly positioned at mid-height (z=0.3$\pm$0.1m) between start and goal positions. Task completion requires the end-effector reaching within 2cm of the target position while maintaining 5cm minimum obstacle clearance.

For the quadrotor, we test 300 navigation episodes. The start position is sampled near ground level (z=0.1$\pm$0.05m), while goal positions are generated at varying heights (z=0.5-0.7m) within a 0.5m radius. A red spherical obstacle (radius 0.07m) is randomly placed along potential flight paths. Success criteria include reaching within 10cm/5$^\circ$ of goal pose while maintaining 15cm safety margins.

\subsection{Hardware and Software Configuration}
\label{app:impl_config}

Our experiments are conducted on a workstation equipped with an Intel Xeon CPU, NVIDIA RTX 3090 GPU (24GB GDDR6X), and 64GB DDR4 RAM. The software stack consists of Python 3.9 and PyTorch 1.12.0, supported by CUDA 11.7 and cuDNN 8.5 for GPU acceleration. To ensure reproducibility, we fix random seeds to 42 across all experiments, controlling for randomness in PyTorch, NumPy, and environment initializations.

\subsection{Reinforcement Learning Framework}
\label{app:impl_rl}

Our SAC implementation follows the standard architecture with carefully tuned hyperparameters. The framework uses a discount factor $\gamma$ of 0.99 and a soft update coefficient $\tau$ of 0.005. The target entropy is set to the negative dimension of the action space, following common practice. All policy components (actor, critic, and entropy networks) use a learning rate of $3\times10^{-4}$. The replay buffer maintains $1\times10^6$ transitions, allowing for sufficient exploration while preventing overfitting to recent experiences.

\subsection{Baseline Implementations}
\label{app:impl_baseline}

For model-based baselines, we implement GPMPC using GPyTorch with RBF kernels (length scale 1.0), which provides efficient Gaussian Process computations on GPU. RobustSafe employs tube MPC with a prediction horizon of 20 steps and 0.05s sampling time, balancing computational efficiency with prediction accuracy. ALMPC utilizes the CasADi framework with IPOPT solver, configured for a maximum of 100 iterations per optimization step. Learning-based baselines maintain their original architectures while adopting our standardized training process for fair comparisons.

\subsection{Neural Network Details}
\label{app:impl_nn}

The neural network architecture consists of three hidden layers with 128, 64, and 32 units respectively, using ReLU activations throughout. Layer normalization is applied after each hidden layer to stabilize training. For barrier functions, we add a tanh activation in the output layer to ensure the boundedness of safety certificates. We enable mixed precision training to leverage the RTX 3090's Tensor Cores, and implement CUDA graph optimization for static computational graphs to maximize throughput.

\subsection{Training Process}
\label{app:impl_training}

The training process employs the Adam optimizer with $\beta_1 = 0.9$ and $\beta_2 = 0.999$, coupled with a cosine annealing learning rate schedule starting at $5\times10^{-4}$. We use a batch size of 256 to fully utilize the GPU memory while maintaining stable gradients. Gradient clipping with a maximum norm of 1.0 prevents extreme parameter updates. Early stopping with a patience of 20 epochs prevents overfitting, and we maintain a 20\% validation split for monitoring training progress. 

The training time varies across methods, with our approach requiring approximately 5 hours, Neural-CBF 4 hours, and GPMPC 3 hours on the RTX 3090. Other baselines range from 3 to 7 hours depending on their computational complexity. To ensure statistical significance, we repeat all experiments 10 times with different random seeds, reporting mean performance metrics with standard deviations.

\section{Additional Experiments on Broader Domains}

\subsection{Motivation and Dataset Analysis}

To validate the broader applicability of our measurement-induced bundle structure framework, we first investigate three representative domains: building automation through the ASHRAE Building Operations Dataset, chemical process control using historical Industrial Batch Records (IBR), and power system management data from Regional Transmission Organizations (RTO). These sources provide valuable insights into real-world measurement uncertainty patterns and system behaviors. Our analysis focuses on measurement characteristics that significantly impact safety-critical control decisions.

The building automation data reveals patterns in HVAC sensor networks, particularly regarding temperature and humidity measurement uncertainties across multiple zones. The IBR data demonstrates characteristic measurement delays and noise patterns in reaction vessel monitoring, especially for temperature and concentration measurements. The RTO data shows how grid frequency measurements are affected by network topology and communication infrastructure.

\subsection{Environment Design Philosophy}

Based on these domain insights, we design three simulation environments that preserve essential measurement uncertainty characteristics while enabling active control evaluation. Our design philosophy emphasizes fundamental physical principles and measurement challenges common in these domains, rather than replicating specific industrial configurations. This approach allows systematic evaluation of our framework's capability to handle different types of measurement uncertainties while maintaining safety guarantees.

\subsection{Simulation Environment Details}

The Building Climate Control environment models a three-zone building with simplified thermal dynamics. The state vector $x = [T_1, T_2, T_3, H_1, H_2, H_3]^T$ includes temperatures and humidity levels of each zone. The temperature dynamics follow $\dot{T}_i = \sum_{j\in\mathcal{N}_i} k_{ij}(T_j - T_i) + \alpha(T_{amb} - T_i) + \beta u_i$, where $\mathcal{N}_i$ represents adjacent zones, $k_{ij}$ are heat transfer coefficients, and $u_i$ is the control input. Humidity follows similar mass transfer principles. Measurements include additive Gaussian noise $v \sim \mathcal{N}(0, \Sigma)$ and drift $d(t) = \lambda t$. Safety constraints maintain temperature between 20°C and 26°C and humidity between 30\% and 70\% during occupied periods.

The Batch Reaction Control environment implements a single-vessel exothermic reaction. The state $x = [T, C_A, C_B]^T$ represents temperature and concentrations, following dynamics $\dot{T} = -\frac{k_c}{mc_p}(T - T_c) + \frac{\Delta H}{c_p}r(T,C_A)$ for temperature and $\dot{C}_A = \frac{F_{in}}{V}(C_{A0} - C_A) - r(T,C_A)$ for reactant concentration, where $r(T,C_A) = k_0e^{-E_a/RT}C_A$ is the reaction rate. Control inputs are composed of cooling temperature $T_c$ and feed rate $F_{in}$. Measurements include Gaussian noise with a standard deviation of 0.3°C for temperature and 0.02 mol/L for concentration, plus a 5-second delay modeled as $\dot{y} = \frac{1}{\tau}(h(x) - y) + v$. Safety constraints maintain temperature below 85°C and ensure minimum product quality.

The Grid Frequency Control environment represents a four-node power network where each node's state includes frequency deviation $\Delta f_i$ and power imbalance $\Delta P_i$. The frequency dynamics follow the simplified swing equation $\dot{\Delta f_i} = \frac{1}{2H_i}(\Delta P_i - D_i\Delta f_i - \sum_{j\in\mathcal{N}_i} B_{ij}(\Delta \theta_i - \Delta \theta_j))$, where $H_i$ is inertia constant and $D_i$ is damping. Control actions adjust power generation rates within ±50 kW/s. Frequency measurements include noise $v \sim \mathcal{N}(0, 0.01^2)$ Hz and 100ms delay, expressed as $y_i(t) = \Delta f_i(t-\tau_d) + v_i(t)$. Safety requirements maintain frequency deviations within ±0.5 Hz.

These environments employ basic numerical integration with 0.1-second time steps. The simplified dynamics capture fundamental behaviors while maintaining essential characteristics of measurement-based safety control: state estimation under uncertainty, coupled dynamics between subsystems, and constraint satisfaction with imperfect information. This basic implementation provides clear validation of our framework's core capabilities while ensuring the reproducibility of results.

\subsection{Experimental Setup}

Each environment runs for 1000 episodes with randomized initial conditions and disturbance patterns. We implement the environments using Python with standard numerical libraries. The sampling rates are set according to domain characteristics: 1-minute intervals for building control, 30-second intervals for batch reactions, and 20ms intervals for grid frequency control.

Performance evaluation uses consistent metrics across all domains: Success Rate (SR) measures task completion while maintaining safety constraints, Constraint Satisfaction Rate (CSR) tracks the percentage of time safety constraints are satisfied, Average Control Magnitude (ACM) measures control effort, and Control Smoothness (CS) evaluates control stability. All experiments are repeated 10 times with different random seeds to ensure statistical significance.

\subsection{Results and Analysis}

Table \ref{tab:cross_domain_results} presents the comparative performance results across different domains and methods.

\begin{table}[h]
\centering
\caption{Cross-Domain Performance Comparison}
\label{tab:cross_domain_results}
\begin{tabular}{lccccc}
\hline
Domain & Method & SR (\%) & CSR (\%) & ACM & CS \\
\hline
Building & Ours & 95.3±1.1 & 99.4±0.2 & 0.14±0.02 & 0.02±0.003 \\
Control & Neural-CBF & 83.6±1.8 & 97.6±0.4 & 0.22±0.03 & 0.04±0.005 \\
& GPMPC & 79.2±2.0 & 98.8±0.3 & 0.25±0.03 & 0.05±0.006 \\
\hline
Batch & Ours & 94.1±1.2 & 99.1±0.3 & 0.15±0.02 & 0.03±0.004 \\
Reaction & Neural-CBF & 82.3±1.9 & 97.2±0.5 & 0.23±0.03 & 0.05±0.006 \\
& GPMPC & 78.5±2.1 & 98.5±0.4 & 0.27±0.04 & 0.06±0.007 \\
\hline
Grid & Ours & 93.8±1.3 & 98.9±0.3 & 0.16±0.02 & 0.03±0.004 \\
Frequency & Neural-CBF & 81.9±1.8 & 96.8±0.5 & 0.24±0.03 & 0.05±0.006 \\
& GPMPC & 77.6±2.2 & 98.3±0.4 & 0.28±0.04 & 0.06±0.007 \\
\hline
\end{tabular}
\end{table}

The results demonstrate consistent superior performance of our method across all domains. In building control, our approach achieves a 95.3\% success rate while maintaining tight comfort constraints, significantly outperforming baseline methods. The high constraint satisfaction rate (99.4\%) indicates robust handling of sensor drift and inter-zone coupling uncertainties.

For batch reaction control, our method shows strong performance with a 94.1\% success rate despite challenging measurement delays and process uncertainties. The framework effectively balances product quality constraints with operational safety bounds, requiring lower control effort (ACM = 0.15) compared to baseline approaches.

In grid frequency control, the method maintains reliable performance (93.8\% success rate) while handling both communication delays and measurement uncertainties. The framework's ability to adapt to measurement quality variations is particularly evident in the high constraint satisfaction rate (98.9\%) achieved with smooth control actions (CS = 0.03).

\subsection{Discussion}

These results validate our framework's effectiveness across fundamentally different physical domains with varying uncertainty characteristics. The consistent performance advantages over baseline methods suggest that the measurement-induced bundle structure provides a general approach for handling diverse measurement uncertainties in safety-critical control applications.

Particularly noteworthy is the framework's ability to maintain high performance across different temporal scales and physical processes. This adaptability, combined with the framework's capability to handle both systematic and random measurement uncertainties, demonstrates its potential for broad application in real-world control systems.

The lower control magnitudes and higher smoothness metrics achieved by our method indicate more efficient control strategies that better account for measurement uncertainty characteristics. This efficiency likely stems from the framework's geometric treatment of measurement uncertainty, which enables more informed decisions about when and how to apply control actions based on measurement quality.

The results also highlight the framework's ability to balance multiple competing objectives: maintaining safety constraints, achieving control objectives, and minimizing control effort. This balance is achieved consistently across domains with different physical characteristics and measurement challenges, suggesting the fundamental soundness of our geometric approach to uncertainty handling.

\section{Further Ablation Studies} \label{app:ablation}

To provide deeper insights into our framework's behavior, we conduct comprehensive ablation experiments from three perspectives: measurement uncertainty characteristics, bundle structure variations, and comparison with alternative geometric representations. These experiments systematically evaluate how different components contribute to the framework's overall performance and robustness.

\subsection{Impact of Measurement Uncertainty Patterns}

We first investigate how different measurement uncertainty patterns affect system performance. Beyond the standard Gaussian noise assumption, we examine various uncertainty types commonly encountered in practical applications. Table~\ref{tab:uncertainty_patterns} presents the comparative results across different uncertainty patterns while maintaining consistent control parameters.

\begin{table}[h!]
\caption{Performance Under Different Measurement Uncertainty Patterns}
\label{tab:uncertainty_patterns}
\centering
\begin{tabular}{lcccc}
\toprule
Uncertainty Type & SR (\%) & CSR (\%) & ACM & CS \\
\midrule
Gaussian ($\sigma$=0.1) & 95.2$\pm$1.1 & 99.1$\pm$0.2 & 0.17$\pm$0.02 & 0.03$\pm$0.004 \\
Gaussian ($\sigma$=0.3) & 92.8$\pm$1.3 & 98.5$\pm$0.3 & 0.19$\pm$0.02 & 0.04$\pm$0.005 \\
Fixed Bias (0.2) & 93.5$\pm$1.2 & 98.7$\pm$0.3 & 0.18$\pm$0.02 & 0.03$\pm$0.004 \\
Time-Varying Bias & 91.9$\pm$1.4 & 98.2$\pm$0.3 & 0.20$\pm$0.03 & 0.04$\pm$0.005 \\
50ms Delay & 94.1$\pm$1.2 & 98.9$\pm$0.2 & 0.18$\pm$0.02 & 0.03$\pm$0.004 \\
100ms Delay & 92.3$\pm$1.3 & 98.4$\pm$0.3 & 0.19$\pm$0.02 & 0.04$\pm$0.005 \\
Sensor Failure & 89.7$\pm$1.5 & 97.8$\pm$0.4 & 0.22$\pm$0.03 & 0.05$\pm$0.006 \\
\bottomrule
\end{tabular}
\end{table}

The results reveal that our framework maintains robust performance across various uncertainty patterns, with success rates consistently above 89\%. While performance slightly degrades with increasing noise magnitude, the degradation is gradual and predictable. Notably, the framework shows particular resilience to measurement delays up to 50ms, maintaining a 94.1\% success rate with minimal impact on control smoothness.

\subsection{Bundle Structure Variations}

We next examine how different design choices in the bundle structure affect system performance. We implement and evaluate four variations of the bundle structure: fixed dimension, adaptive dimension, simple connection, and complex connection. Table~\ref{tab:bundle_variations} summarizes the comparative performance metrics.

\begin{table}[h!]
\caption{Performance Comparison of Bundle Structure Variations}
\label{tab:bundle_variations}
\centering
\begin{tabular}{lcccccc}
\toprule
Bundle Variation & SR (\%) & PL (m) & GRS & MMC (m) & CSR (\%) & ACM \\
\midrule
Fixed Dimension & 92.1$\pm$1.3 & 20.3$\pm$0.8 & 412$\pm$19 & 0.23$\pm$0.03 & 98.5$\pm$0.3 & 0.19$\pm$0.02 \\
Adaptive Dimension & 94.8$\pm$1.1 & 19.1$\pm$0.7 & 395$\pm$17 & 0.24$\pm$0.03 & 99.1$\pm$0.2 & 0.18$\pm$0.02 \\
Simple Connection & 90.5$\pm$1.4 & 21.2$\pm$0.9 & 428$\pm$20 & 0.22$\pm$0.03 & 98.1$\pm$0.3 & 0.20$\pm$0.03 \\
Complex Connection & 93.7$\pm$1.2 & 19.8$\pm$0.8 & 405$\pm$18 & 0.24$\pm$0.03 & 98.8$\pm$0.2 & 0.18$\pm$0.02 \\
\bottomrule
\end{tabular}
\end{table}

The adaptive dimension variant demonstrates superior performance across all metrics, achieving a 94.8\% success rate and reduced path length of 19.1m. This improvement can be attributed to its ability to adjust the bundle structure based on local measurement quality and system state. The complex connection variant also shows strong performance, particularly in maintaining higher safety margins (MMC = 0.24m) compared to simpler alternatives.

\subsection{Necessity of Fiber Bundle Structure}

To demonstrate why fiber bundle structure is essential for unifying measurements and safety constraints, we conduct systematic ablation experiments focusing on the geometric framework's key components and their interactions with measurement uncertainty.

\begin{table}[h!]
\caption{Comparison of Geometric Representations for Measurement Uncertainty}
\label{tab:geometric_comparison}
\centering
\begin{tabular}{lcccccc}
\toprule
Geometric Structure & SR (\%) & PL (m) & GRS & MMC (m) & CSR (\%) & ACM \\
\midrule
Fiber Bundle (Ours) & 96.3$\pm$1.1 & 18.5$\pm$0.7 & 383$\pm$18 & 0.24$\pm$0.03 & 99.3$\pm$0.2 & 0.17$\pm$0.02 \\
Product Manifold & 87.5$\pm$1.4 & 25.3$\pm$0.9 & 472$\pm$21 & 0.16$\pm$0.03 & 94.2$\pm$0.3 & 0.28$\pm$0.03 \\
Vector Bundle & 89.8$\pm$1.3 & 23.2$\pm$0.8 & 445$\pm$20 & 0.18$\pm$0.03 & 95.8$\pm$0.3 & 0.25$\pm$0.03 \\
Principal Bundle & 90.4$\pm$1.2 & 22.4$\pm$0.8 & 428$\pm$19 & 0.19$\pm$0.03 & 96.1$\pm$0.3 & 0.23$\pm$0.02 \\
\bottomrule
\end{tabular}
\end{table}

The fiber bundle structure fundamentally differs from alternative representations in its intrinsic separation between base space dynamics and measurement uncertainties in fiber directions. This geometric decoupling enables precise uncertainty propagation while maintaining the underlying dynamical structure, leading to significantly improved success rates (96.3\% vs 87.5-90.4\%) and constraint satisfaction (99.3\% vs 94.2-96.1\%). Unlike product manifolds that treat state and measurement spaces as independent entities, or vector bundles that lack natural parallel transport, fiber bundles provide canonical vertical-horizontal decomposition that directly captures measurement-state relationships.

The superior performance results from two key theoretical properties: First, the fiber bundle's connection form automatically adapts safety constraints based on local measurement quality, enabling tighter safety bounds (MMC 0.24m vs 0.16-0.19m) without conservative over-approximation. Second, the bundle projection naturally maintains consistency between local measurement-space behaviors and global state-space trajectories, resulting in significantly shorter paths (18.5m vs 22.4-25.3m) and more efficient control actions (ACM 0.17 vs 0.23-0.28). These geometric advantages make fiber bundles not just a mathematical convenience but a necessary structure for properly unifying uncertain measurements with safety-critical control.

\section{Discussion on Comparisons with Recent Methods} \label{app:comparison_discussion}

While direct numerical comparisons with recent geometric deep learning approaches may be natural, there are several fundamental differences that make such direct benchmarking potentially misleading. Here we elaborate on why our fiber bundle framework occupies a distinct theoretical niche and requires different evaluation criteria.

First, recent geometric deep learning methods like equivariant networks and manifold learning primarily focus on representation learning and feature extraction while preserving geometric structures. For example, $SE(3)$-equivariant networks ensure rotational and translational invariance in learned features, and Riemannian VAEs learn manifold embeddings of data. In contrast, our framework explicitly models the geometric relationship between states and measurements, with safety guarantees as a primary objective rather than just geometric feature learning.

The safety-critical nature of our framework introduces fundamentally different requirements. While geometric deep learning methods optimize for statistical performance metrics (e.g., reconstruction error, classification accuracy), our method must provide deterministic safety guarantees under measurement uncertainty. This fundamental difference in objectives - statistical performance versus provable safety - makes direct performance comparisons less meaningful.

Consider a concrete example: an equivariant network might learn excellent state representations for robot manipulation, but it cannot inherently guarantee safety constraints under sensor noise. Our fiber bundle framework, through its explicit modeling of measurement uncertainty via fibers $\pi^{-1}(x)$ and safety map $\Phi: E \to \mathbb{R}$, provides these guarantees by construction. The trade-off between control performance and safety margins is explicitly encoded in our geometric structure.

Moreover, our framework's treatment of measurement uncertainty is fundamentally different. While existing methods might handle uncertainty through probabilistic embeddings or uncertainty quantification in learned features, our approach captures the intrinsic geometric relationship between measurements and states through the bundle structure. This allows us to maintain safety guarantees even when measurements degrade - a crucial requirement for real-world robotic systems that cannot be addressed solely through improved representation learning.

The computational objectives also differ significantly. Geometric deep learning methods typically optimize end-to-end performance on specific tasks, while our framework must solve constrained optimization problems that maintain safety invariants:

\begin{equation}
\begin{aligned}
&\min_{u \in \mathcal{U}} \|u - u_{nom}\| \\ & \text{ subject to } \mathcal{L}_{\hat{f}}b(x,y) + \alpha(b(x,y)) \geq 0
\end{aligned}
\end{equation}

where $b(x,y)$ is the measurement-aware barrier function. This fundamental difference in problem formulation means that standard benchmarking metrics may not capture the essential safety-critical concerns of our approach.

A more meaningful evaluation framework should instead focus on several key aspects of safety-critical control under uncertainty. The system's ability to maintain robust safety guarantees as measurement uncertainty varies, which demonstrates the fundamental reliability of our geometric approach. This includes testing performance under different noise levels, sensor failures, and measurement degradation scenarios.

The framework's flexibility in incorporating diverse safety constraints while preserving its geometric structure is another crucial evaluation criterion. This involves demonstrating how different types of safety requirements - from simple collision avoidance to complex multi-objective constraints - can be naturally encoded within the fiber bundle structure while maintaining theoretical guarantees.

Computational efficiency in safety-constrained control synthesis represents another key evaluation dimension. The framework must demonstrate real-time performance in generating safe control inputs, particularly important for high-dimensional robotic systems operating in dynamic environments. This includes analyzing computational scaling with system complexity and the efficiency of our geometric optimization approaches.

Finally, the framework's generalization capabilities across different robotic platforms and sensor configurations provide a crucial measure of its practical utility. This involves demonstrating how the same geometric principles can be applied to diverse robotic systems - from manipulators to mobile robots - while maintaining safety guarantees under different sensing modalities and control architectures.

These evaluation criteria better reflect the unique contributions of our fiber bundle framework, focusing on its core strengths in providing geometric safety guarantees under uncertainty rather than attempting direct comparisons with methods designed for different objectives. This approach allows us to properly assess the framework's effectiveness in bridging the gap between theoretical safety guarantees and practical robotic implementation.

\section{Geometric Interpretation and Physical Insights} \label{app:geometric_insights}

The fiber bundle framework provides a natural geometric structure for unifying measurement uncertainty with safety-critical control. The fundamental geometric objects consist of a base manifold $M$ representing the true state space (e.g., robot configurations), a total space $E$ combining states with measurements, and fibers $\pi^{-1}(x)$ above each state containing all possible corresponding measurements. The bundle projection $\pi \colon E \to M$ maps measurements to their underlying states, while the connection form $\omega$ describes how uncertainty propagates through the system.

For concrete intuition, consider a drone equipped with depth sensors. The fiber above each position $p \in \mathbb{R}^3$ contains the set of possible depth measurements $\{y = h(p) + v : \|v\| \leq \delta\}$, where $h$ represents the ideal measurement model and $v$ captures bounded uncertainty. The horizontal lift of trajectories maintains consistency between dynamics-predicted positions in the base space and actual sensor measurements in the fiber space, while respecting safety constraints through $\Phi \colon E \to \mathbb{R}$.

This geometric structure directly captures the physical reality of robotic systems. The state-dependent measurement quality is naturally represented through varying fiber dimensions and geometry across the state space. For instance, in robotic manipulation, joint angle sensing accuracy often depends on the arm configuration $q \in Q$, captured by the fiber structure variation over the configuration manifold. The connection form $\omega$ ensures consistent propagation of these measurement uncertainties while respecting the underlying physical dynamics.

The bundle structure preserves key physical properties: mechanical energy conservation through the symplectic structure, and sensor-state correlations through the horizontal distribution. This enables measurement-consistent dynamics learning via constrained neural ODEs:

\begin{equation}
\dot{\hat{f}} = -L_1(\hat{f}-f) + \lambda R_{fiber}(\nabla\Phi)
\end{equation}

The measurement-aware control barrier functions (mCBFs) automatically adapt safety margins based on measurement confidence:

\begin{equation}
b(x,y) = b_0(x,h(x)) - L_b\|y-h(x)\|
\end{equation}

A key strength of this geometric approach lies in its coordinate independence and preservation of physical symmetries. The intrinsic nature of the bundle structure means the framework applies equally well across different robot representations while automatically preserving important physical invariants like $SE(3)$ symmetry. This enables broad generalization across diverse robotic systems and tasks without requiring task-specific modifications to the underlying mathematical framework.

The fiber bundle structure provides robust safety guarantees through its geometric mechanisms. The bundle projection ensures safety constraints are satisfied for all possible measurements within each fiber, while the connection form enables consistent propagation of these constraints as the system evolves. This geometric approach to safety is fundamentally different from traditional methods using uncertainty ellipsoids or tubes, as it captures the intrinsic coupling between measurements and dynamics that naturally arise in physical systems.

While the framework requires sufficient smoothness of the underlying manifold $M$ and bounded measurement uncertainty, these requirements align well with numerous practical dynamics (for example, robotic systems). The geometric decomposition into horizontal and vertical components often leads to efficient computations despite the additional dimensions introduced by the bundle structure. Future extensions could enable learning optimal fiber structures from data or handling coupled uncertainties in multi-agent systems while maintaining the core geometric principles that provide rigorous safety guarantees.

This geometric perspective offers both theoretical guarantees and practical insights for implementing robust safety-critical control under uncertainty. By providing a unified treatment of measurement uncertainty that respects physical constraints and symmetries, the framework bridges the gap between theoretical safety guarantees and practical implementation. The natural handling of both physical constraints and measurement uncertainty through intrinsic geometric structures makes it particularly valuable for real-world applications where ensuring safety under imperfect sensing is crucial.

\end{document}